%% file: main.tex
\documentclass[referee,sn-basic]{sn-jnl}

\usepackage{graphicx}%
\usepackage{multirow}%
\usepackage{amsmath,amssymb,amsfonts}%
\usepackage{amsthm}%
\usepackage{mathrsfs}%
\usepackage[title]{appendix}%
\usepackage{xcolor}%
\usepackage{textcomp}%
\usepackage{manyfoot}%
\usepackage{booktabs}%
\usepackage{algorithm}%
\usepackage{algorithmicx}%
\usepackage{algpseudocode}%
\usepackage{listings}%
\usepackage{bm}%
\usepackage{pgfplots}%
\usepackage{mathtools}%
\usepackage{tabularx}%
\usepackage{thm-restate}%
\usepackage{xspace}%
\usepackage{placeins}%

\theoremstyle{thmstyleone}%
\newtheorem{theorem}{Theorem}
\newtheorem{proposition}[theorem]{Proposition}% 
\newtheorem{lemma}[theorem]{Lemma}

\theoremstyle{thmstyletwo}%
\newtheorem{example}{Example}%
\newtheorem{remark}{Remark}%
\newtheorem{assumption}{Assumption}%

\theoremstyle{thmstylethree}%
\newtheorem{definition}{Definition}%
\newtheorem{corollary}{Corollary}

\usepackage{macros}

\newcommand{\algo}{\textsc{DEF-PG}\xspace}

\begin{document}

\title[Sample Complexity of Variance-Reduced Policy Gradient]
{\center\Large Sample Complexity of \\ Variance-Reduced Policy Gradient}
\subtitle{Weaker Assumptions and Lower Bounds}

\author[1,2]{\fnm{Gabor} \sur{Paczolay}}

\author[3]{\fnm{Matteo} \sur{Papini}}%\email{matteo.papini@polimi.it}

\author[1]{\fnm{Alberto Maria} \sur{Metelli}}%\email{albertomaria.metelli@polimi.it}

\author[2]{\fnm{Istvan} \sur{Harmati}}%\email{harmati@iit.bme.hu}

\author[1]{\fnm{Marcello} \sur{Restelli}}%\email{marcello.restelli@polimi.it}

\affil[1]{\orgname{Politecnico di Milano}}

\affil[2]{\orgname{Budapest University of Technology and Economics}}

\affil[3]{\orgname{University of Milan}}

% \affil[4]{\orgdiv{Department}, \orgname{Organization}, \orgaddress{\street{Street}, \city{City}, \postcode{610101}, \state{State}, \country{Country}}}

\abstract{Several variance-reduced versions of REINFORCE based on importance sampling achieve an improved $O(\epsilon^{-3})$ sample complexity to find an $\epsilon$-stationary point, under an unrealistic assumption on the variance of the importance weights. In this paper, we propose the \algo (Defensive Policy Gradient) algorithm, based on defensive importance sampling, which achieves the same rate without any assumption on the variance of ordinary importance weights. We also establish lower bounds in a generalized black-box policy-optimization model that hides states and actions and permits parameter-dependent rewards. In this model, the optimal rates are $\Theta(\epsilon^{-4})$ with bounded-variance one-policy feedback and $\Theta(\epsilon^{-3})$ with mean-square-smooth coupled two-policy feedback. Under standard policy-regularity conditions, REINFORCE and \algo realize the corresponding oracle conditions and attain the $O(\epsilon^{-4})$ and $O(\epsilon^{-3})$ upper bounds, respectively. Although the lower bounds do not apply directly to the classical MDP interaction model in which these algorithms operate, this correspondence provides oracle-level evidence that the faster rate of \algo is optimal and genuinely separated from that of vanilla policy gradient. 
%Numerical simulations show promising results for the proposed technique compared to similar algorithms based on ordinary importance sampling.
}

\maketitle

\input{src/Introduction}

\input{src/Preliminaries}

\input{src/Algorithm}

\input{src/Upper_Bounds}

\input{src/Lower_Bounds}

\input{src/Related_Works}

\input{src/Conclusion}

\FloatBarrier

\backmatter

\bmhead{Acknowledgments}
Funded by the European Union – Next Generation EU within the project NRPP M4C2, Investment 1.,3 DD. 341 -  15 march 2022 – FAIR – Future Artificial Intelligence Research – Spoke 4 - PE00000013 - D53C22002380006.

\bibliography{main.bib}

\clearpage

\begin{appendices}
    \input{src/Appendix}
\end{appendices}

\end{document}

%% file: src/Introduction.tex
\section{Introduction}\label{sec:intro}

Policy gradient methods (PG) are a family of reinforcement learning algorithms~\citep{sutton2018reinforcement} that are particularly suited for large-scale decision and control problems. Recent applications of PG algorithms range from video games~\citep{wurman2022outracing}, to robotics~\citep{rudin2021learning}, to large language models~\citep{openai2023gpt4,ahmadian2024basics}, where fine-tuning of the LLM with these methods shows a concrete application within the current conversational artificial intelligence boom, and the potential impact of accelerated versions of these methods. In fact, the widespread adoption of these methods is still limited by their need for a huge amount of training data, meaning very long training times in the case of cyber-physical systems or massive computational resources in the case of virtual systems. This motivates the study of the sample complexity of PG from a theoretical perspective~\citep{agarwal2021theory} and the development of accelerated algorithms from a practical one.

The sample complexity of policy gradient algorithms can be measured in terms of the number of interaction episodes, or \emph{samples}, required to find an $\epsilon$-stationary point~\citep{papini2018stochastic}. More formally, if $\Theta\subseteq\Reals^d$ is a space of policy parameters and $J:\Theta\to\Reals$ is an objective function measuring the performance of a policy, the goal is to find a $\vtheta\in\Theta$ such that
\begin{equation*}
    \norm{\nabla_{\vtheta} J(\vtheta)} \le \epsilon,
\end{equation*}
using as few samples as possible, or at least with a number of samples that is polynomial in $\epsilon^{-1}$ and all relevant problem-dependent quantities. Alternatively, under stronger assumptions, one can study the number of samples necessary to find a globally optimal policy~\citep{liu2020improved}. However, since $J$ can be nonconvex, convergence to a local optimum is the best one can hope for in full generality. In this paper, we will focus on convergence to $\epsilon$-first-order-stationary points ($\epsilon$-FOSP). Furthermore, we will restrict our attention to policy gradient algorithms that use first-order information only, which are typically more computationally efficient~\citep{fatkhullin2023improved}.

The prototypical policy gradient algorithm is REINFORCE~\citep{williams1992simple} and its sample complexity is $O(\epsilon^{-4})$. This simply follows from the sample complexity of stochastic gradient descent~\citep{bertsekas1996neuro,sutton1999policy}, but the details were only worked out recently~\cite[see, e.g.,][]{yuan2022general}. Variants of REINFORCE based on stochastic variance reduction, a technique originally developed for finite-sum optimization~\citep{johnson2013accelerating}, can obtain a better sample complexity. The first algorithm to break the $\epsilon^{-4}$ barrier was SVRPG~\citep{papini2018stochastic}, for which~\citet{xu2019improved} proved $O(\epsilon^{-10/3})$ sample complexity. A further improvement was achieved by SRVR-PG~\citep{xu2020sample}, with a sample complexity of $O(\epsilon^{-3})$. This is commonly believed to be optimal~\citep[e.g.,][]{yuan2020stochastic} due to a lower bound for the related setting of nonconvex stochastic optimization~\citep{arjevani2023lower}, although the latter is not directly comparable with policy optimization.
Several other algorithms followed: STORM-PG~\citep{yuan2020stochastic}, PAGE-PG~\citep{gargiani2022page}, and others~\citep{pham2020hybrid,huang2020momentum}, all with $O(\epsilon^{-3})$ sample complexity. Modest empirical improvements over REINFORCE were also observed in these works.

However, all these algorithms employ importance weights to keep the variance-reduced gradient unbiased, an issue specific to RL~\citep{papini2018stochastic}, which is absent in stochastic optimization. Their sample complexity upper bounds rely on an unrealistic assumption on the variance of the importance weights, which can be infinite in practically relevant cases~\citep{zhang2021convergence}. Notable exceptions are algorithms based on second-order information~\citep{shen2019hessian,fatkhullin2023improved}, and TSIVR-PG by~\citet{zhang2021convergence}. The latter could remove the unrealistic assumption (while still using importance weights) by truncating the gradients. However, their $\widetilde{O}(\epsilon^{-3})$ upper bound only holds for softmax policy parametrizations, which are impractical for continuous action spaces. More recently~\citet{fatkhullin2023improved} proposed an accelerated policy gradient algorithm that uses neither importance weighting nor second-order information. However, its sample complexity to find an $\epsilon$-FOSP is $O(\epsilon^{-3.5})$. It is then only natural to ask the following question:

\emph{
    Can policy gradient algorithms achieve $O(\epsilon^{-3})$ sample complexity to find an $\epsilon$-FOSP, using first-order information only and without any assumption on the variance of importance weights?
}

In this paper, we answer positively by designing a variance-reduced policy gradient algorithm based on defensive importance sampling~\citep{owen2000safe}, a variant of importance weighting with naturally bounded variance. 
Although we build on existing variance-reduced policy gradient algorithms~\citep[][in particular]{gargiani2022page}, we are the first to introduce to this line of work the defensive importance sampling technique, which overcomes in a very natural way a major limitation of existing algorithms. Fully exploiting this method poses two main challenges. The first is to design and analyze a defensive sampling scheme within the context of policy gradient learning. The second, more technical, is a characterization of the \emph{continuity} of the variance of the defensive importance weights, as boundedness alone is not enough to prove the sample complexity upper bound (Lemma~\ref{lem:vardef}). 

Moreover, we establish lower bounds in a generalized black-box
policy-optimization model where the learner observes rewards and policy
scores but not states or actions, and where the reward mechanism may
depend explicitly on the queried policy parameter. In this model, the
optimal rates are $\Theta(\epsilon^{-4})$ with bounded-variance
one-policy feedback and $\Theta(\epsilon^{-3})$ with mean-square-smooth
coupled two-policy feedback. Under standard policy-regularity
conditions, REINFORCE and \algo realize the corresponding oracle
conditions and attain the $O(\epsilon^{-4})$ and $O(\epsilon^{-3})$
upper bounds, respectively. Although the lower bounds do not apply
directly to the classical observed-trajectory MDP model, this
correspondence provides oracle-level evidence that the faster rate of
\algo reflects a genuine improvement over that of REINFORCE.

%Besides answering these theoretical questions, an algorithm based on defensive importance sampling has the potential to be more stable in practice, leading to faster learning. We investigate this through numerical simulations on continuous control benchmarks.

%% file: src/Preliminaries.tex
\section{Preliminaries}\label{sec:prelim}

\paragraph{Notation.} For a measurable space $\mathcal{X}$, we denote by $\Delta_{\mathcal{X}}$ the set of probability measures over $\mathcal{X}$. We denote by $\Delta_m$ the probability simplex over $m$ outcomes. We denote by $\chi^2(P\|Q)$ the chi-square divergence between
distributions $P$ and $Q$.\linebreak

In this section, we introduce the basics of policy optimization, off-policy gradient estimation, and the key assumptions that will be used in our study of sample complexity.

\paragraph{Policy optimization. }
We consider independent episodes of interaction, of indefinite length, between an agent and an environment. In each timestep $h=0,1,\dots$ of an episode, the agent observes a state $S_h$, performs an action $A_h$, and receives a reward $R_{h+1}$.
We model this as a discounted Markov decision problem~\citep{puterman2014markov}, that is a $6$-tuple $(\mathcal{S},\mathcal{A}, P, R,\mu_0,\gamma)$. Here $\mathcal{S}$ is a state space, which is either very large or continuous; $\mathcal{A}$ is an action space, which may be discrete or continuous; $P:\mathcal{S}\times\mathcal{A}\to\Delta_{\mathcal{S}}$ is a transition probability kernel, such that $p(s'|s,a)=\mathbb{P}(S_{h+1}=s'|S_h=s,A_h=a)$ (or the corresponding probability density); $R:\mathcal{S}\times\mathcal{A}\to\Reals$ is a reward function, such that $R_{h+1}=R(S_h,A_h)$; $\gamma\in[0,1)$ is a discount factor; and $\mu_0\in\Delta_{\mathcal{S}}$ is the initial state distribution, such that $\mu_0(s)=\mathbb{P}(S_0=s)$ (or the corresponding probability density). 

We consider an episodic learning process in which the agent commits, at
the beginning of each episode, to a Markovian stationary stochastic
policy $\pi:\mathcal S\to\Delta_{\mathcal A}$, with
$\pi(a\mid s)=\mathbb P(A_h=a\mid S_h=s)$, or the corresponding
probability density. The goal is to maximize the expected discounted
return:
\begin{equation*}
    \max_{\pi} J(\pi) \coloneqq \EE{\sum_{h\ge 0}\gamma^hR_{h+1}\,\Bigg|\, \pi},
\end{equation*}
where the expectation is with respect to $S_0\sim\mu_0$, $A_h\sim\pi(\cdot|S_h)$, and $S_{h+1}\sim P(\cdot|S_h,A_h)$.

In particular, we restrict our attention to parametric policies with
parameter space $\Theta=\Reals^d$. The corresponding policy space is
$\Pi_{\Theta}=\{\pi_{\vtheta}\mid\vtheta\in\Theta\}$, where each policy
is parametrized by an element of $\Theta$. 
Throughout, we tacitly assume the standard regularity conditions for likelihood-ratio policy-gradient methods~\citep{sutton1999policy}, including parameter-independent common support and conditions permitting differentiation under the relevant trajectory integrals and discounted sums.
The policy-optimization
problem can then be written, with slight abuse of notation, as
\begin{equation*}
    \max_{\vtheta\in\Theta} J(\vtheta) \coloneqq  J(\pi_{\vtheta}).
\end{equation*}

In the following, $J$ will always denote the parametric objective function, that is,~$J:\Theta\to\Reals$. We call it the \emph{performance function}. Whenever $|R(s,a)|\le R_{\max}$, it satisfies~$|J(\vtheta)|\leq R_{\max}/(1-\gamma)$ for all $\vtheta\in\Theta$.

In general, $J$ is nonconvex, but it has the structure of an expectation under the trajectory distribution induced by the policy.

\begin{remark}[Geometric stopping]
\label{rem:geometric-stopping}
For the estimators, we use an equivalent finite-trajectory representation of the discounted objective. Independently of the policy and the environment, terminate each episode after every transition with probability $1-\gamma$. If $H_\gamma$ denotes the resulting trajectory length, then
\[
\mathbb P(H_\gamma>h)=\gamma^h,
\qquad
\mathbb E[H_\gamma]=\frac{1}{1-\gamma}.
\]
We henceforth let $p(\cdot\mid\vtheta)$ denote the distribution of the stopped trajectory $\tau=(S_0,A_0,R_1,\ldots,S_{H_\gamma-1},A_{H_\gamma-1},
R_{H_\gamma},S_{H_\gamma})$, and define its undiscounted stopped return by
\[
R(\tau)=\sum_{h=0}^{H_\gamma-1}R_{h+1}.
\]
Since the stopping rule is independent of the trajectory,
\[
\mathbb E_{\tau\sim p(\cdot\mid\vtheta)}[R(\tau)]
=
\mathbb E\left[\sum_{h\geq 0}\gamma^hR_{h+1}
\,\middle|\,\pi_{\vtheta}\right]
=
J(\vtheta).
\]
Thus, geometric stopping introduces no approximation or truncation bias. It also introduces no additional dependence on $\epsilon$. If individual environment transitions are counted instead of trajectories, each oracle call has expected cost $(1-\gamma)^{-1}$.
\end{remark}

The performance function can therefore be written as
\begin{equation*}
J(\vtheta)
=
\mathbb E_{\tau\sim p(\cdot\mid\vtheta)}[R(\tau)].
\end{equation*}
The stopped return is a fixed function of the trajectory and depends on $\vtheta$ only through the trajectory distribution $p(\cdot\mid\vtheta)$. Both $R$ and the mapping $\vtheta\mapsto p(\cdot\mid\vtheta)$ are unknown to the agent.

We are interested in algorithms that run a sequence of policies $\pi_{\vtheta_1},\pi_{\vtheta_2},\dots$, each for one or more independent episodes, collecting datasets of trajectories $\mathcal{D}_1,\mathcal{D}_2,\dots$, where $\mathcal{D}_k=\{\tau^k_1,\dots,\tau^k_{n_k}\}$ is a dataset of $n_k$ \emph{independent} trajectories such that $\tau_i^k\sim p(\cdot|\vtheta_k)$, with the aim of maximizing the unknown performance function $J$ from the collected data. Policy gradient algorithms~\citep{williams1992simple} do this via stochastic gradient ascent:
\begin{equation*}
    \vtheta_{k+1} = \vtheta_k + \eta \widehat{\nabla} J(\vtheta_k),
\end{equation*}
where $\eta>0$ is a step size and $\widehat{\nabla}J(\vtheta)$ is an estimate of the gradient $\nabla_{\vtheta}J(\vtheta)$ of the performance function. In the following, we will omit the $\vtheta$ subscript since gradients are always with respect to the policy parameters. The simplest policy gradient algorithm is REINFORCE~\citep{williams1992simple}, which we present here in its refined version for Markov decision processes, known as GPOMDP~\citep{baxter2001infinite}. Here $\widehat{\nabla}J(\vtheta_k)$ is computed from the batch of trajectories $\mathcal{D}_k$ collected with $\pi_{\vtheta_k}$ as follows

\begin{equation}\label{eq:gpomdp}
\begin{aligned}
\widehat{\nabla}J(\vtheta_k)
    &=
    \frac{1}{n_k}
    \sum_{i=1}^{n_k}g(\vtheta_k,\tau_i),\\
g(\vtheta,\tau)
    &=
    \sum_{h=0}^{H_\gamma-1}
    \left(
        \sum_{\ell=0}^{h}
        \nabla\log\pi_{\vtheta}(A_\ell\mid S_\ell)
    \right)R_{h+1},
\end{aligned}
\end{equation}

where the mapping $\vtheta\mapsto \pi_{\vtheta}$ is assumed to be differentiable with respect to $\vtheta$. The GPOMDP gradient estimator is unbiased, that is, $\EEp{\tau\sim p(\cdot|\vtheta)}{g(\vtheta, \tau)} = \nabla J(\vtheta)$, and the same holds for $\widehat{\nabla}J(\vtheta)$ since the trajectories are independent. Note how, for this to be true, the sample trajectory must be collected with the same policy parameters of which we are estimating the gradient. This is \emph{on-policy} gradient estimation. Moreover, under standard assumptions on the policy class, the variance of the GPOMDP estimator is bounded~\citep{zhao2012analysis,papini2022smoothing}, albeit infamously large. Common refinements of GPOMDP are variance-reducing baselines~\citep{peters2008reinforcement} and actor-critic algorithms~\citep{barto1983neuronlike,konda1999actor,sutton1999policy}, but we will stick to the simple expression from Equation~\eqref{eq:gpomdp} in our theoretical treatment.

\paragraph{Off-policy policy gradient. }
Off-policy gradient estimation is also possible~\citep{degris2012linear}. If $\tau\sim p(\cdot|\vtheta')$ is collected with a different policy $\pi_{\vtheta'}$, an unbiased estimator of $\nabla J(\vtheta)$ is:
\begin{equation*}
    \widetilde{g}(\vtheta,\tau) = \frac{p(\tau|\vtheta)}{p(\tau|\vtheta')}g(\vtheta,\tau).
\end{equation*}
Note that $g(\vtheta,\tau)$ is \emph{not} an unbiased estimator for $\nabla J(\vtheta)$ in this case since $\tau$ is collected with a different policy $\pi_{\vtheta'}$. However, it is easy to see that the ratio $p(\tau|\vtheta)/p(\tau|\vtheta')$ ensures $\EEp{\tau\sim p(\vtheta')}{\widetilde{g}(\vtheta,\tau)}=\nabla J(\vtheta)$. This is called \emph{importance weighting}. The importance weight is usually easy to compute: since the reward $R_{h+1}$ and the next-state $S_{h+1}$ are independent of $\vtheta$ conditionally on the action $A_h$, the importance weight for $\tau\sim p(\cdot|\vtheta')$ is simply:\footnote{The geometric stopping probabilities also cancel from this ratio because they are independent of the policy parameters.}
\begin{equation*}
\frac{p(\tau\mid\vtheta)}
     {p(\tau\mid\vtheta')}
=
\prod_{h=0}^{H_\gamma-1}
\frac{\pi_{\vtheta}(A_h\mid S_h)}
     {\pi_{\vtheta'}(A_h\mid S_h)}
\eqqcolon
\omega(\vtheta,\tau).
\end{equation*}

Of course, when $\vtheta=\vtheta'$, $\omega(\vtheta,\tau)=1$ for any $\tau\sim p(\cdot|\vtheta)$ and we recover the on-policy case.
Unfortunately, the variance of the off-policy estimator is not bounded in general since the variance of importance weights can be infinite even for simple policy classes~\citep{metelli2020importance,zhang2021convergence}. It is still possible to characterize the variance of importance weights in terms of the chi-square divergence~\citep{cortes2010learning}:
\begin{equation*}
    \VVp{\tau\sim p(\cdot|\vtheta')}{\omega(\vtheta,\tau)} = \int_{\mathcal{T}} \frac{p(\tau|\vtheta)^2}{p(\tau|\vtheta')}\de\tau - 1 = \chi^2(p(\cdot|\vtheta)\|p(\cdot|\vtheta')),
\end{equation*}
where $\mathcal{T}$ denotes the set of feasible trajectories. However, the chi-square divergence is unbounded in general, and it is even larger than the Kullback--Leibler divergence that appears more frequently in the policy optimization literature~\citep{schulman2015trust}. For this reason, most works on the sample complexity of policy gradient~\citep{papini2018stochastic,xu2020sample,yuan2020stochastic,gargiani2022page} resort to the following strong assumption, a uniform upper bound on the variance of the importance weights:

\setcounter{assumption}{-1}
\begin{assumption}\label{asm:var_iw}
    There exists a finite constant $W>0$ such that
    \begin{equation*}
        \sup_{\vtheta_1,\vtheta_2\in\Theta}\VVp{\tau\sim p(\cdot|\vtheta_2)}{\omega(\vtheta_1,\tau)} \le W.
    \end{equation*}
\end{assumption}
As discussed by~\citet{zhang2021convergence}, this assumption can fail
even for simple policy classes, as illustrated by
Example~\ref{ex:unbounded-iw} below.

\begin{example}\label{ex:unbounded-iw}
    Consider a one-step MDP and a class of one-dimensional Gaussian policies with constant mean, where $\vtheta\in\Reals$ parametrizes the standard deviation as $\sigma_{\vtheta}=e^{\vtheta}$. For $\theta_1\neq\theta_2$, using the relation between the chi-square
divergence and the variance of the importance weights together with
the closed-form expression for Gaussian distributions~\citep{gil2013renyi},
we obtain
    \begin{align*}
        \VVp{\tau\sim p(\cdot|\theta_2)}{\omega(\theta_1,\tau)} = \frac{\sigma_{\theta_2}^2}{\sigma_{\theta_1}\sqrt{2\sigma_{\theta_2}^2-\sigma_{\theta_1}^2}}-1,
    \end{align*}
    when $2\sigma_{\theta_2}^2-\sigma_{\theta_1}^2>0$; otherwise, the variance is infinite. Thus, it suffices to select the target parameter $\theta_1$ such that:
    \begin{align*}
        2e^{2\theta_2}-e^{2\theta_1} \le 0 \implies \theta_1 \ge   \theta_2 + \log \sqrt{2},
    \end{align*}
    to make the variance infinite.  We remark that this result holds for every $\theta_2 \in \mathbb{R}$, i.e., we do not need to send the Gaussian standard deviation to zero to make the variance of the importance weights diverge.
\end{example}

\paragraph{Policy-gradient oracle models. }
We now abstract away from the particular construction of policy-gradient estimators and isolate the properties of stochastic first-order feedback that determine the attainable sample-complexity rate. We consider two forms of feedback for maximizing a smooth performance function: a bounded-variance one-policy oracle, which captures vanilla policy gradient, and a coupled two-policy oracle, which captures the correlation exploited by variance-reduced methods. Smoothness of the objective is characterized as follows.

\begin{assumption}[Smooth performance]
\label{asm:smooth-performance}
The performance function is $L$-smooth, that is, for all
$\vtheta,\vtheta'\in\Theta$,
\[
\left\|\nabla J(\vtheta)-\nabla J(\vtheta')\right\|
\leq L\left\|\vtheta-\vtheta'\right\|.
\]
\end{assumption}

The one-policy oracle returns an unbiased gradient estimate at a single queried policy.

\begin{assumption}[Bounded-variance one-policy oracle]
\label{asm:bv-oracle}
When queried at $\vtheta$, the oracle $\mathcal O_{\mathrm{BV}}$ returns a random vector
$\widehat g(\vtheta;\zeta)$ satisfying
\[
\mathbb E\left[
\widehat g(\vtheta;\zeta)
\,\middle|\,\vtheta
\right]
=\nabla J(\vtheta)
\]
and
\[
\mathbb E\left[
\left\|
\widehat g(\vtheta;\zeta)-\nabla J(\vtheta)
\right\|^2
\,\middle|\,\vtheta
\right]
\leq \nu^2.
\]
Different oracle calls use independent internal randomness conditionally on the queried parameters.
\end{assumption}

Variance-reduced methods require stronger feedback: two unbiased estimates sharing randomness whose difference becomes small when the queried parameters are close. This property need not hold for two independent calls to $\mathcal O_{\mathrm{BV}}$.

\begin{assumption}[Mean-square-smooth coupled two-policy oracle]
\label{asm:mss-oracle}
In addition to the one-policy oracle of Assumption~\ref{asm:bv-oracle}, the learner can query a pair
$(\vtheta,\vtheta')$. The oracle $\mathcal O_{\mathrm{MSS}}$ returns two coupled random vectors
\[
\left(
\widehat g_1(\vtheta,\vtheta';\zeta),
\widehat g_2(\vtheta,\vtheta';\zeta)
\right),
\]
such that
\[
\mathbb E\left[
\widehat g_1(\vtheta,\vtheta';\zeta)
\,\middle|\,\vtheta,\vtheta'
\right]
=\nabla J(\vtheta),
\qquad
\mathbb E\left[
\widehat g_2(\vtheta,\vtheta';\zeta)
\,\middle|\,\vtheta,\vtheta'
\right]
=\nabla J(\vtheta'),
\]
and
\[
\mathbb E\left[
\left\|
\widehat g_1(\vtheta,\vtheta';\zeta)
-
\widehat g_2(\vtheta,\vtheta';\zeta)
\right\|^2
\,\middle|\,\vtheta,\vtheta'
\right]
\leq
L_{\mathrm{pair}}^2
\left\|\vtheta-\vtheta'\right\|^2.
\]
The two outputs within a coupled call need not be independent. Different coupled calls use independent internal randomness conditionally on the queried parameters.
\end{assumption}

We count one call to $\mathcal O_{\mathrm{BV}}$ as one one-policy oracle call and one call to $\mathcal O_{\mathrm{MSS}}$ as one coupled two-policy oracle call. In the defensive importance-sampling realization introduced below, a single trajectory produces both coupled outputs, so one call of either type costs one trajectory.

%% file: src/Algorithm.tex
\section{Algorithm}\label{sec:algo}
In this section, we present our modified policy gradient algorithm. Like SVRPG~\citep{papini2018stochastic} and its successors~\citep{xu2020sample,yuan2020stochastic,gargiani2022page}, our algorithm is based on the technique of stochastic variance reduction from finite-sum optimization~\citep{leroux2012stochastic,johnson2013accelerating}.
The idea of stochastic variance-reduction is to alternate between ``full gradient'' and ``stochastic gradient'' updates. In the context of machine learning~\citep{lei2017less}, a full gradient is a gradient estimate computed from a large batch of data, and a stochastic gradient is computed from a smaller batch or a single sample.
The full gradient estimate is more precise and allows more stable updates, but has a larger impact on the sample complexity. The key idea is to combine infrequent full gradient estimates with frequent stochastic gradients, using the former to stabilize the latter. 

As a concrete example, consider the modified gradient estimator used by the PAGE algorithm~\citep{li2021page}:
\begin{equation}\label{eq:page}
    v_t = 
        \begin{cases}
            \frac{1}{N}\sum_{i=1}^N g(x_{t},z_i) & \text{with probability } p \\
            \frac{1}{B}\sum_{i=1}^B g(x_{t},z_i) + v_{t-1} - \frac{1}{B}\sum_{i=1}^B g(x_{t-1},z_i)& \text{with probability } 1-p,
        \end{cases}
\end{equation}
where the $z_i$ are i.i.d. samples from a fixed data distribution $\rho$, the objective function is $F(x)=\EEp{z \sim \rho}{f(x,z)}$, $g$ is an unbiased gradient estimator, $\EEp{z\sim \rho}{g(x,z)}=\nabla_x F(x)$, $B\ll N$, $0<p\ll 1$, and $v_0=\frac{1}{N}\sum_{i=1}^N g(x_0,z_i)$, using an independent initial batch. 
Since $p$ is small, the full gradient is computed infrequently. In most iterations, instead, it is combined, in a recursive fashion, with a stochastic gradient computed from a smaller number of samples.
The negative term in Equation~\eqref{eq:page} is a \emph{correction term}, which is fundamental to control the bias-variance tradeoff.

Applying this and similar techniques to policy optimization comes with an additional challenge that is peculiar to RL, as observed by~\citet{papini2018stochastic}: \emph{distribution shift}. Here, the trajectories are not sampled from a fixed distribution, but from $p(\cdot|\vtheta_t)$, the induced trajectory distribution, which changes with time and is itself a function of the optimization variable $\vtheta$. So far, this issue has been bypassed by applying importance weighting to the correction term, at the cost of unrealistically assuming a uniform upper bound (Assumption~\ref{asm:var_iw}) on the variance of the importance weights~\citep{zhang2021convergence}.
For example, the modified policy gradient estimator for PAGE-PG~\citep{gargiani2022page} is:
\begin{equation*}
    v_{t} = 
        \begin{cases}
            \frac{1}{N}\sum_{i=1}^N g(\vtheta_t, \tau_i) & \text{prob. } p \\
            \frac{1}{B}\sum_{i=1}^B g(\vtheta_t,\tau_i) + v_{t-1} - \frac{1}{B}\sum_{i=1}^B \omega(\vtheta_{t-1},\tau_i)g(\vtheta_{t-1},\tau_i) & \text{prob. } 1-p,
        \end{cases}
\end{equation*}
where $g$ is the REINFORCE or the GPOMDP estimator and the trajectories $\tau_i$ are always sampled from $p(\cdot|\vtheta_t)$ by running the current policy $\pi_{\vtheta_t}$. Note how the importance weight is needed because the correction term is an off-policy estimate of the policy gradient of $\vtheta_{t-1}$ using trajectories from a different policy $\vtheta_t$. We refer to the former as the \emph{target policy} and to the latter as the \emph{behavior policy}.

Our key observation to avoid Assumption~\ref{asm:var_iw} is the following: although this use of importance weighting is common in offline RL~\citep{levine2020offline}, where data from the target policy is not accessible, we find ourselves in a situation where obtaining data from the target policy is actually very easy. Indeed, the target policy with parameters $\vtheta_{t-1}$ is just an older policy we ran in the previous iteration.
This allows us to use \emph{defensive importance sampling}~\citep{owen2000safe}, a special kind of mixture importance sampling that guarantees that the variance of the importance weights is always bounded.

In mixture importance sampling, to estimate $\overline{f}=\EEp{z\sim p}{f(z)}$ for $p\in\Delta_{\mathcal{Z}}$ and $f:\mathcal{Z}\to\Reals$, we draw $n$ samples $Z_1,\dots,Z_n$ from the mixture distribution $q_{\valpha} = \sum_{j=1}^m \alpha_j q_j$, where $\{q_1,\dots,q_m\}$ is a set of $m$ distributions in $\Delta_{\mathcal{Z}}$, and $\valpha\in\Delta_{m}$ is a vector of convex-combination coefficients. The mean $\overline{f}$ is estimated as:
\begin{equation*}
    \hat{f} = \frac{1}{n}\sum_{i=1}^n\frac{p(Z_i)}{\sum_{j=1}^m \alpha_j q_j(Z_i)} f(Z_i).
\end{equation*}
It is easy to see that this estimate is unbiased.
Defensive importance sampling is a special case where the mixture distribution is simply:
\begin{equation*}
    q_\alpha(x) = \alpha p(x) + (1-\alpha) q(x),
\end{equation*}
for some scalar $\alpha\in[0,1]$. Note that it recovers vanilla importance sampling for $\alpha=0$, and Monte Carlo estimation for $\alpha=1$. Most importantly, for every $\alpha\in(0,1]$,
\begin{equation*}
    \VVp{z\sim q_\alpha}{\frac{p(z)}{q_\alpha(z)}} \le \frac{1-\alpha}{\alpha}.
\end{equation*}
We will refer to $\alpha$ as the \emph{defensive} parameter: the larger the $\alpha$, the smaller the variance of the importance weights $p(z)/q_{\alpha}(z)$, and call the estimator ``{$\alpha$-defensive}''. 

\begin{algorithm}[t]
\caption{\textsc{Coupled-PAGE}}
\label{algo:coupled-page}
\begin{algorithmic}[1]
\State \textbf{Input:} initial parameter $\vtheta_0$, large batch size $N$, small batch size $B$, step size $\eta>0$, refresh probability $p\in(0,1)$, number of iterations $T$
\For{$i=1,\ldots,N$}
    \State Query $\mathcal O_{\mathrm{BV}}$ at $\vtheta_0$ and receive $\widehat g_0^{\,i}$
\EndFor
\State $v_0\gets\frac{1}{N}\sum_{i=1}^N\widehat g_0^{\,i}$
\For{$t=0,\ldots,T-1$}
    \State $\vtheta_{t+1}\gets\vtheta_t+\eta v_t$
    \If{a Bernoulli$(p)$ draw equals $1$}
        \Comment{Large-batch refresh}
        \For{$i=1,\ldots,N$}
            \State Query $\mathcal O_{\mathrm{BV}}$ at $\vtheta_{t+1}$ and receive $\widehat g_{t+1}^{\,i}$
        \EndFor
        \State $v_{t+1}\gets\frac{1}{N}\sum_{i=1}^N\widehat g_{t+1}^{\,i}$
    \Else
        \Comment{Coupled recursive update}
        \For{$i=1,\ldots,B$}
            \State Query $\mathcal O_{\mathrm{MSS}}$ at $(\vtheta_{t+1},\vtheta_t)$ and receive
            $(\widehat g_{t+1}^{\,i},\widehat g_t^{\,i})$
        \EndFor
        \State $v_{t+1}\gets
        v_t+
        \frac{1}{B}\sum_{i=1}^B
        \left(
            \widehat g_{t+1}^{\,i}
            -
            \widehat g_t^{\,i}
        \right)$
    \EndIf
\EndFor
\State \textbf{Output:} $\vtheta_{\mathrm{OUT}}$ chosen uniformly from
$\{\vtheta_t\}_{t=0}^{T-1}$
\end{algorithmic}
\end{algorithm}

\begin{algorithm}
\caption{DEF-PG}
\label{algo_ppgdef}
\begin{algorithmic}[1]
\State \textbf{Input:} initial parameter $\vtheta_0$, large batch size $N$, small batch size $B$, step size $\eta>0$, refresh probability $p\in(0,1)$, defensive parameter $\alpha\in(0,1)$, number of iterations $T$
\For{$i=1,\ldots,N$}
    \State Sample $\overline{\tau}_i\sim p(\cdot\mid\vtheta_0)$
\EndFor
\State $v_0\gets
\frac{1}{N}\sum_{i=1}^N
g(\vtheta_0,\overline{\tau}_i)$
\For{$t=0,\ldots,T-1$}
    \State $\vtheta_{t+1}\gets\vtheta_t+\eta v_t$
    \If{a Bernoulli$(p)$ draw equals $1$}
        \Comment{Large-batch refresh}
        \For{$i=1,\ldots,N$}
            \State Sample
            $\overline{\tau}_i\sim
            p(\cdot\mid\vtheta_{t+1})$
        \EndFor
        \State $v_{t+1}\gets
        \frac{1}{N}\sum_{i=1}^N
        g(\vtheta_{t+1},\overline{\tau}_i)$
    \Else
        \Comment{Defensive recursive update}
        \For{$i=1,\ldots,B$}
            \State Draw $C_i\sim\operatorname{Bernoulli}(\alpha)$
            \If{$C_i=1$}
                \State Sample
                $\tau_i\sim
                p(\cdot\mid\vtheta_{t+1})$
            \Else
                \State Sample
                $\tau_i\sim
                p(\cdot\mid\vtheta_t)$
            \EndIf
            \State $\omega_{t+1,i}\gets
            \omega_\alpha
            (\vtheta_{t+1},\vtheta_t;\tau_i)$ \Comment{Equation~\eqref{eq:defensive-weight-current}}
            \State $\omega_{t,i}\gets
            \omega_\alpha'
            (\vtheta_{t+1},\vtheta_t;\tau_i)$ \Comment{Equation~\eqref{eq:defensive-weight-previous}}
        \EndFor
        \State $v_{t+1}\gets
        v_t+
        \frac{1}{B}\sum_{i=1}^B
        \left[
            \omega_{t+1,i}
            g(\vtheta_{t+1},\tau_i)
            -
            \omega_{t,i}
            g(\vtheta_t,\tau_i)
        \right]$
    \EndIf
\EndFor
\State \textbf{Output:} $\vtheta_{\mathrm{OUT}}$ chosen uniformly from
$\{\vtheta_t\}_{t=0}^{T-1}$
\end{algorithmic}
\end{algorithm}

To separate the PAGE recursion from the construction of its policy-gradient estimates, we proceed in two steps. First, we formulate an oracle-level algorithm, \textsc{Coupled-PAGE} (Algorithm~\ref{algo:coupled-page}): large-batch refreshes use the bounded-variance one-policy oracle $\mathcal O_{\mathrm{BV}}$, while recursive updates use the coupled two-policy oracle $\mathcal O_{\mathrm{MSS}}$. We then instantiate these oracles using the GPOMDP estimator and defensive importance sampling, obtaining \algo (Algorithm~\ref{algo_ppgdef}).
A call to $\mathcal O_{\mathrm{BV}}$ at $\vtheta$ samples a stopped trajectory $\tau\sim p(\cdot\mid\vtheta)$ and returns the GPOMDP estimate $g(\vtheta,\tau)$.
For a coupled call at $(\vtheta,\vtheta')$, define
$
q_{\alpha,\vtheta,\vtheta'}(\tau)
\coloneqq
\alpha p(\tau\mid\vtheta)
+
(1-\alpha)p(\tau\mid\vtheta')$,
and the defensive weights
\begin{align}
\omega_\alpha(\vtheta,\vtheta';\tau)
&\coloneqq
\frac{p(\tau\mid\vtheta)}
     {q_{\alpha,\vtheta,\vtheta'}(\tau)}
=
\frac{1}{
    \alpha+
    (1-\alpha)
    \frac{p(\tau\mid\vtheta')}
         {p(\tau\mid\vtheta)}
},
\label{eq:defensive-weight-current}
\\
\omega_\alpha'(\vtheta,\vtheta';\tau)
&\coloneqq
\frac{p(\tau\mid\vtheta')}
     {q_{\alpha,\vtheta,\vtheta'}(\tau)}
=
\frac{1}{
    \alpha
    \frac{p(\tau\mid\vtheta)}
         {p(\tau\mid\vtheta')}
    +1-\alpha
}.
\label{eq:defensive-weight-previous}
\end{align}
A trajectory from $q_{\alpha,\vtheta,\vtheta'}$ is generated by running
$\pi_{\vtheta}$ with probability $\alpha$ and $\pi_{\vtheta'}$ with probability $1-\alpha$. The likelihood ratios in
Equations~\eqref{eq:defensive-weight-current}--\eqref{eq:defensive-weight-previous}
are finite products of policy ratios by Remark~\ref{rem:geometric-stopping}.

Algorithms~\ref{algo:coupled-page} and~\ref{algo_ppgdef} have the same recursion. In DEF-PG, every on-policy GPOMDP sample realizes one call to $\mathcal O_{\mathrm{BV}}$, while every trajectory sampled from the defensive mixture produces the coupled pair
$\left(
    \omega_{t+1,i}g(\vtheta_{t+1},\tau_i),
    \omega_{t,i}g(\vtheta_t,\tau_i)
\right)$.
Proposition~\ref{prop:gpomdp_var} verifies Assumption~\ref{asm:bv-oracle} for the first type of call, and Lemma~\ref{lem:vardef} verifies Assumption~\ref{asm:mss-oracle} for the second. Consequently, DEF-PG is a concrete implementation of \textsc{Coupled-PAGE}, with one stopped trajectory per oracle call.

%% file: src/Upper_Bounds.tex
\section{Sample Complexity Upper Bounds}\label{sec:upper}
In this section, we first recall the $O(\epsilon^{-4})$
sample-complexity upper bound under the smoothness and bounded-variance
one-policy oracle assumptions, and then establish an
$O(\epsilon^{-3})$ upper bound for \textsc{Coupled-PAGE} under the
coupled oracle assumptions. We subsequently give concrete conditions
under which the GPOMDP estimator realizes the one-policy oracle and defensive
importance sampling realizes the coupled oracle, yielding the
corresponding guarantees for GPOMDP and \algo. Throughout this section,
let
$J^\star\coloneqq\sup_{\vtheta\in\Theta}J(\vtheta)$ and
$\Delta\coloneqq J^\star-J(\vtheta_0)$, and assume that
$\Delta<+\infty$.

\paragraph{Vanilla policy gradient.}
We begin with a standard oracle-level upper bound for stochastic gradient
ascent. The result closely parallels the policy-gradient analysis of
\citet[Corollary~4.7]{yuan2022general}. We include a short proof in Appendix \ref{app:proofprop} since
our formulation uses the bounded-variance oracle directly and does not
involve a finite-horizon approximation.%%%
\begin{proposition}[One-policy oracle upper bound]
\label{prop:one-policy-upper}
Suppose that Assumptions~\ref{asm:smooth-performance} and~\ref{asm:bv-oracle} hold, and let
$\Delta
\coloneqq
J^\star -J(\vtheta_0)
<+\infty$.
Then mini-batch stochastic gradient ascent using independent calls to
$\mathcal O_{\mathrm{BV}}$ finds a parameter
$\vtheta_{\mathrm{OUT}}$ satisfying
$\mathbb E[\|\nabla J(\vtheta_{\mathrm{OUT}})\|]\leq\epsilon$
using
\[
O\left(
    \frac{\Delta L}{\epsilon^2}
    +
    \frac{\Delta L\nu^2}{\epsilon^4}
\right)
\]
one-policy oracle calls.
\end{proposition}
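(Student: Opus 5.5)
We run mini-batch stochastic gradient ascent $\vtheta_{t+1}=\vtheta_t+\eta v_t$, where $v_t=\frac1b\sum_{i=1}^b\widehat g(\vtheta_t;\zeta_t^i)$ averages $b$ independent calls to $\mathcal O_{\mathrm{BV}}$ at $\vtheta_t$. We fix the step size $\eta=1/L$, run for $T$ iterations, and output $\vtheta_{\mathrm{OUT}}$ drawn uniformly from $\{\vtheta_t\}_{t=0}^{T-1}$. By Assumption~\ref{asm:bv-oracle} and conditional independence of the calls, $v_t$ is conditionally unbiased for $\nabla J(\vtheta_t)$, with $\mathbb E[\|v_t-\nabla J(\vtheta_t)\|^2\mid\vtheta_t]\le\nu^2/b$.

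\textbf{Ascent lemma.} Assumption~\ref{asm:smooth-performance} gives
$J(\vtheta_{t+1})\ge J(\vtheta_t)+\eta\langle\nabla J(\vtheta_t),v_t\rangle-\frac{L\eta^2}{2}\|v_t\|^2$.
Taking conditional expectations and using $\mathbb E\|v_t\|^2=\|\nabla J(\vtheta_t)\|^2+\mathbb E\|v_t-\nabla J(\vtheta_t)\|^2$, we get
\[
\mathbb E[J(\vtheta_{t+1})\mid\vtheta_t]\ge J(\vtheta_t)+\left(\eta-\frac{L\eta^2}{2}\right)\|\nabla J(\vtheta_t)\|^2-\frac{L\eta^2\nu^2}{2b}.
\]
With $\eta=1/L$ the coefficient of the gradient term equals $\frac{1}{2L}$.

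\textbf{Telescoping.} Summing over $t=0,\dots,T-1$, taking total expectations, and bounding $\mathbb E[J(\vtheta_T)]-J(\vtheta_0)\le\Delta$ yields
\[
\frac1T\sum_{t=0}^{T-1}\mathbb E\|\nabla J(\vtheta_t)\|^2\le\frac{2L\Delta}{T}+\frac{\nu^2}{b}.
\]
By Jensen's inequality, $\mathbb E\|\nabla J(\vtheta_{\mathrm{OUT}})\|\le\bigl(\mathbb E\|\nabla J(\vtheta_{\mathrm{OUT}})\|^2\bigr)^{1/2}$. So it suffices to make each term at most $\epsilon^2/2$. This holds for $T=\lceil 4L\Delta/\epsilon^2\rceil$ and $b=\lceil 2\nu^2/\epsilon^2\rceil$.

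\textbf{Sample count.} The total number of oracle calls is $Tb$. Using $\lceil x\rceil\le x+1$ on both factors,
\[
Tb=O\!\left(\left(\frac{L\Delta}{\epsilon^2}+1\right)\left(\frac{\nu^2}{\epsilon^2}+1\right)\right).
\]
Expanding gives the two claimed terms $\frac{\Delta L}{\epsilon^2}+\frac{\Delta L\nu^2}{\epsilon^4}$. It also produces a term $\nu^2/\epsilon^2$ and a constant term.

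\textbf{Main obstacle.} The difficulty is purely bookkeeping: the stray $\nu^2/\epsilon^2$ and constant terms must be absorbed into the stated bound. The constant $1$ is absorbed by observing that if $T=1$ is forced while $L\Delta\ll\epsilon^2$, then the $\Delta L/\epsilon^2$ term may be smaller than $1$. We handle this by adopting the standard convention, as in \citet{yuan2022general}, that $\epsilon^2\lesssim L\Delta$. Under this convention $L\Delta/\epsilon^2\ge 1$, so $\nu^2/\epsilon^2\le L\Delta\nu^2/\epsilon^4$ and $1\le L\Delta/\epsilon^2$, and both stray terms are dominated.

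Alternatively, if $\epsilon^2> 2L\Delta$ we can avoid the convention for the gradient-estimation part of the argument, but the bound still needs care. In that regime a single step already certifies $\mathbb E\|\nabla J\|^2\le 2L\Delta+\nu^2/b\le\epsilon^2$ once $b\gtrsim\nu^2/\epsilon^2$, and these $b$ calls are not dominated by $\Delta L\nu^2/\epsilon^4$ there. We therefore state the result under the convention $\epsilon^2\lesssim L\Delta$.
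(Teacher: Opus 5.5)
Your main argument is exactly the paper's: step size $1/L$, $T=\lceil 4L\Delta/\epsilon^2\rceil$, batch $b=\lceil 2\nu^2/\epsilon^2\rceil$, the ascent inequality, telescoping against $\Delta$, and Jensen. But you stop short of the statement as given. You add the hypothesis $\epsilon^2\lesssim L\Delta$, which the proposition does not contain. You also assert that when $\epsilon^2>2L\Delta$ the $b$ oracle calls cannot be absorbed into the bound, and that assertion is wrong.

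The missing idea is a one-line consequence of smoothness. Apply the ascent inequality at $\vtheta_0+\nabla J(\vtheta_0)/L$ and use $J^\star\ge J(\vtheta_0+\nabla J(\vtheta_0)/L)$ to get
\[
\|\nabla J(\vtheta_0)\|^2\le 2L\Delta .
\]
Hence, if $2L\Delta\le\epsilon^2$, the algorithm can output $\vtheta_0$ deterministically with zero oracle calls; this is the paper's first step. Your own $T=1$ case already points to this. There the output is $\vtheta_0$, which does not depend on the $b$ samples, so the $\nu^2/b$ term in your certificate is an artifact of the analysis and those calls are never needed.

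In the complementary regime $\epsilon^2<2L\Delta$, no convention is required. You have $T\le 4L\Delta/\epsilon^2+1<6L\Delta/\epsilon^2$ and $b\le 2\nu^2/\epsilon^2+1$. Therefore $Tb<6L\Delta/\epsilon^2+12L\Delta\nu^2/\epsilon^4$, and the stray $\nu^2/\epsilon^2$ and constant terms are dominated automatically. With this case split the proposition holds for every $\epsilon>0$, as stated.

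One further detail: take $b=\max\{1,\lceil 2\nu^2/\epsilon^2\rceil\}$, as the paper does, so that the batch is nonempty when $\nu=0$.
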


We then provide two standard conditions that allow us to realize the bounded-variance one-policy oracle using the GPOMDP estimator.

\begin{assumption}[Bounded rewards]
\label{asm:bounded-rewards}
The per-step rewards are uniformly bounded: there exists $R_{\max}<+\infty$ such that
$
|R(s,a)|\leq R_{\max}
$
for all $s\in\mathcal S$ and $a\in\mathcal A$.
\end{assumption}

\begin{assumption}[Bounded score second moment]
\label{asm:score-moment}
The policy class admits a finite constant $c_1>0$ such that, for every
$\vtheta\in\Theta$,
\[
\sup_{s\in\mathcal S}
\mathbb E_{a\sim\pi_{\vtheta}(\cdot\mid s)}
\left[
    \left\|
        \nabla\log\pi_{\vtheta}(a\mid s)
    \right\|^2
\right]
\leq c_1^2.
\]
\end{assumption}

At the oracle level, the $O(\epsilon^{-4})$ upper bound only requires Assumptions~\ref{asm:smooth-performance} and~\ref{asm:bv-oracle}, together with a finite initial gap. Under Assumptions~\ref{asm:bounded-rewards} and~\ref{asm:score-moment}, the GPOMDP estimator satisfies this oracle condition.

\begin{proposition}[Variance of the geometrically stopped GPOMDP estimator, cf. \citealp{papini2022smoothing}, Lemma 29. Proof provided in Appendix \ref{app:proofprop}.]
\label{prop:gpomdp_var}
Under Assumptions~\ref{asm:bounded-rewards} and~\ref{asm:score-moment}, for every $\vtheta\in\Theta$,
\[
\mathbb E_{\tau\sim p(\cdot\mid\vtheta)}
\left[
    \left\|
        g(\vtheta,\tau)-\nabla J(\vtheta)
    \right\|^2
\right]
\leq
\frac{8c_1^2R_{\max}^2}{(1-\gamma)^3}
\eqqcolon \nu^2.
\]
\end{proposition}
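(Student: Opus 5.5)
The bound will follow from controlling the second moment, since $\mathbb E\|g-\nabla J\|^2 \le \mathbb E\|g\|^2$ by unbiasedness of $g$. The key device is Remark~\ref{rem:geometric-stopping}: realize the stopped trajectory by first generating an infinite trajectory $(S_h,A_h,R_{h+1})_{h\ge 0}$ under $\pi_{\vtheta}$, and then drawing an independent geometric horizon $H_\gamma$ with $\mathbb P(H_\gamma>h)=\gamma^h$. Writing $\psi_\ell\coloneqq\nabla\log\pi_{\vtheta}(A_\ell\mid S_\ell)$ and $G_h\coloneqq\sum_{\ell=0}^h\psi_\ell$, we have $g(\vtheta,\tau)=\sum_{h=0}^{H_\gamma-1}G_hR_{h+1}$. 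Here $G_h$ and $R_{h+1}$ are functions of the infinite trajectory alone, and are therefore independent of $H_\gamma$.

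\textbf{Step 1: decouple the rewards by Cauchy--Schwarz.} Apply Cauchy--Schwarz with weights $|R_{h+1}|$ and use Assumption~\ref{asm:bounded-rewards}:
\[
\|g\|^2\le\Big(\sum_{h<H_\gamma}|R_{h+1}|\Big)\Big(\sum_{h<H_\gamma}|R_{h+1}|\,\|G_h\|^2\Big)\le R_{\max}^2\,H_\gamma\sum_{h<H_\gamma}\|G_h\|^2 .
\]

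\textbf{Step 2: bound $\mathbb E\|G_h\|^2$ by martingale orthogonality.} For $\ell<m$, conditioning on the history up to $S_m$ gives $\mathbb E[\psi_m\mid \text{history up to }S_m]=\int\nabla\pi_{\vtheta}(a\mid S_m)\,\mathrm da=0$. This uses the standing regularity conditions that allow differentiation under the integral. Hence all cross terms $\mathbb E[\psi_\ell^\top\psi_m]$ vanish. By Assumption~\ref{asm:score-moment}, applied conditionally on $S_\ell$,
\[
\mathbb E\|G_h\|^2=\sum_{\ell\le h}\mathbb E\|\psi_\ell\|^2\le (h+1)c_1^2 .
\]

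\textbf{Step 3: average over the independent horizon.} Rewrite $H_\gamma\sum_{h<H_\gamma}\|G_h\|^2=\sum_{h\ge0}H_\gamma\mathbb 1\{H_\gamma>h\}\|G_h\|^2$ and use independence together with Tonelli. This gives
\[
\mathbb E\|g\|^2\le R_{\max}^2c_1^2\sum_{h\ge0}(h+1)\,\mathbb E[H_\gamma\mathbb 1\{H_\gamma>h\}] .
\]
By memorylessness, $\mathbb E[H_\gamma\mathbb 1\{H_\gamma>h\}]=\gamma^h\big(h+\tfrac1{1-\gamma}\big)$. The series then evaluates to
\[
\sum_h\gamma^h h(h+1)+\frac{1}{1-\gamma}\sum_h\gamma^h(h+1)=\frac{2\gamma}{(1-\gamma)^3}+\frac{1}{(1-\gamma)^3}\le\frac{3}{(1-\gamma)^3}.
\]
This is comfortably below the claimed constant $8$.

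The main obstacle is the dependence between the random length $H_\gamma$ and the summands. If one tried to exchange sums while treating the stopped trajectory as a single object, $H_\gamma$ and $G_h$ would appear correlated. The infinite-trajectory-plus-independent-clock construction is what makes Step~3 legitimate, so I will state it carefully. The second delicate point is justifying that the cross terms in Step~2 vanish, which needs the score to have conditional mean zero and to be square-integrable. Both follow from the regularity conditions and Assumption~\ref{asm:score-moment}.
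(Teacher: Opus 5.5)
Your proof is correct. It uses the same ingredients as the paper: variance bounded by the second moment, martingale orthogonality of the scores giving $\mathbb E\|G_h\|^2\le(h+1)c_1^2$, and independence of the geometric clock from the underlying trajectory. The difference is in how you handle the sum over $h$.

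The paper applies Minkowski's inequality in $L^2$ term by term:
$(\mathbb E\|g\|^2)^{1/2}\le\sum_{h\ge0}(\mathbb E[\mathbf 1\{H_\gamma>h\}R_{h+1}^2\|G_h\|^2])^{1/2}\le R_{\max}c_1\sum_{h\ge0}\gamma^{h/2}\sqrt{h+1}$.
This uses only the marginals $\mathbb P(H_\gamma>h)=\gamma^h$. It then bounds the deterministic series by Cauchy--Schwarz as $(1-\sqrt\gamma)^{-3/2}$. The constant $8$ comes from the final relaxation $1-\sqrt\gamma\ge(1-\gamma)/2$; before that step the paper's bound is $(1+\sqrt\gamma)^3c_1^2R_{\max}^2/(1-\gamma)^3$.

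You instead apply Cauchy--Schwarz pathwise, which introduces the random factor $H_\gamma$. You then evaluate the joint moment $\mathbb E[H_\gamma\mathbf 1\{H_\gamma>h\}]=\gamma^h(h+\frac{1}{1-\gamma})$ exactly via memorylessness. Your series evaluation is right and gives the constant $1+2\gamma\le 3$, which never exceeds $(1+\sqrt\gamma)^3$ on $[0,1)$.

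So your route buys a sharper constant at the cost of one extra computation with the law of $H_\gamma$. The paper's triangle-inequality route avoids any joint moment of the horizon with the indicators. Both give the same $(1-\gamma)^{-3}$ dependence, and either proves the proposition.
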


Consequently, the GPOMDP estimator defines an instance of the one-policy oracle $\mathcal O_{\mathrm{BV}}$, with variance parameter $\nu^2$ as given above.
Then, the sample complexity of GPOMDP can be bounded as follows.

\begin{proposition}[Sample complexity of GPOMDP]
\label{prop:gpomdp_rate}
Under Assumptions~\ref{asm:smooth-performance},
\ref{asm:bounded-rewards}, and~\ref{asm:score-moment}, the following
total number of trajectories is sufficient for GPOMDP to output an
$\epsilon$-FOSP in expectation, that is, $\mathbb E[\|\nabla J(\vtheta_{\mathrm{OUT}})\|]\leq\epsilon$:
\[
N_{\mathrm{TOT}}
=
O\left(
    \frac{\Delta L}{\epsilon^2}
    +
    \frac{\Delta L\nu^2}{\epsilon^4}
\right)
=
O(\epsilon^{-4}),
\]
where $\nu^2$ is the constant from
Proposition~\ref{prop:gpomdp_var}.
\end{proposition}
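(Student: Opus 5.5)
This result follows by combining Proposition~\ref{prop:gpomdp_var} with Proposition~\ref{prop:one-policy-upper}; we do not need a new descent analysis. The plan has three steps.

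\emph{Step 1: GPOMDP realizes the one-policy oracle.} Under Assumptions~\ref{asm:bounded-rewards} and~\ref{asm:score-moment}, sampling $\tau\sim p(\cdot\mid\vtheta)$ and returning $g(\vtheta,\tau)$ is a valid instance of $\mathcal O_{\mathrm{BV}}$.
\begin{itemize}
\item \emph{Unbiasedness.} $\mathbb E_{\tau\sim p(\cdot\mid\vtheta)}[g(\vtheta,\tau)]=\nabla J(\vtheta)$. This is the likelihood-ratio identity applied to the stopped trajectory of Remark~\ref{rem:geometric-stopping}, using the standing regularity conditions that permit differentiation under the integral. By causality, the GPOMDP form in Equation~\eqref{eq:gpomdp} keeps only the score terms $\ell\le h$.
\item \emph{Variance.} The second-moment bound with $\nu^2=8c_1^2R_{\max}^2/(1-\gamma)^3$ is exactly Proposition~\ref{prop:gpomdp_var}.
\item \emph{Independence.} Distinct trajectories, including those in a mini-batch, are sampled independently conditionally on the queried parameter, as the assumption requires.
\end{itemize}

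\emph{Step 2: apply the oracle-level bound.} With Assumption~\ref{asm:smooth-performance} and $\Delta<+\infty$ in force, Proposition~\ref{prop:one-policy-upper} applies directly. Mini-batch GPOMDP is exactly mini-batch stochastic gradient ascent with independent $\mathcal O_{\mathrm{BV}}$ calls. Each call costs one trajectory, so the oracle-call count equals the trajectory count $N_{\mathrm{TOT}}$. This gives the stated $O(\Delta L/\epsilon^2+\Delta L\nu^2/\epsilon^4)$.

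\emph{Step 3: the $\epsilon$-dependence.} $\Delta$, $L$ and $\nu^2$ do not depend on $\epsilon$, and for $\epsilon\le 1$ the second term dominates, so the bound is $O(\epsilon^{-4})$.

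\emph{Main obstacle.} The only step that is not bookkeeping is unbiasedness under geometric stopping, since the estimator sums over a random, unbounded horizon. Exchanging expectation with the sum over $h$ needs dominated convergence. I would try Cauchy--Schwarz with Assumption~\ref{asm:score-moment}: the $h$-th term has norm at most $R_{\max}\|\sum_{\ell\le h}\nabla\log\pi_{\vtheta}\|$. Its second moment is at most $(h+1)c_1^2R_{\max}^2$, because cross terms of scores at different steps vanish by the martingale property. Weighting by $\mathbb P(H_\gamma>h)=\gamma^h$ and summing over $h$ then gives a finite bound.

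If one wants the trajectory count to be an actual cost rather than an expected one, note that counting transitions instead multiplies the expected cost by $(1-\gamma)^{-1}$. This does not change the $\epsilon$-rate.
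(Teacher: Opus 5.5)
Your proposal is correct and matches the paper's proof. The paper simply observes that Proposition~\ref{prop:gpomdp_var} makes the geometrically stopped GPOMDP estimator an instance of $\mathcal O_{\mathrm{BV}}$ (Assumption~\ref{asm:bv-oracle}), costing one trajectory per call, then invokes Proposition~\ref{prop:one-policy-upper}, and it handles the unbiasedness and sum-interchange details you flag inside the proof of Proposition~\ref{prop:gpomdp_var} via the same martingale second-moment bound you sketch.
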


\begin{proof}
Proposition~\ref{prop:gpomdp_var} shows that the geometrically stopped
GPOMDP estimator satisfies Assumption~\ref{asm:bv-oracle} with variance
parameter $\nu^2$. The result therefore follows directly from
Proposition~\ref{prop:one-policy-upper}.
\end{proof}
This recovers, for the geometrically stopped formulation, the standard
$O(\epsilon^{-4})$ dependence established for vanilla policy gradient
by~\citet[Corollary~4.7]{yuan2022general}.
If one additionally has $L=O((1-\gamma)^{-2})$, the preceding
result yields
$O(\Delta(1-\gamma)^{-2}\epsilon^{-2}
+\Delta(1-\gamma)^{-5}\epsilon^{-4})$ trajectories.
Using $\Delta\leq 2R_{\max}/(1-\gamma)$ gives
$O((1-\gamma)^{-6}\epsilon^{-4})$ trajectories.

\paragraph{Variance-reduced policy gradient.}
Existing analyses of variance-reduced policy-gradient methods generally require uniformly bounded scores together with sufficient regularity of the scores as functions of the policy parameters. Uniform score Lipschitzness, often established through a bound on the Hessian of the log policy, is a common sufficient condition. To realize the oracle conditions required by \algo, we impose the following boundedness and mean-square continuity assumption on the policy scores.

\begin{assumption}[Bounded and mean-square-continuous scores]
\label{asm:score-regularity}
The policy class admits finite constants $C_1,L_S>0$ such that, for every
$\vtheta,\vtheta'\in\Theta$:
\begin{enumerate}
    \item[(i)] the scores are uniformly bounded,
    \[
    \sup_{s\in\mathcal S,\,a\in\mathcal A}
    \left\|
        \nabla\log\pi_{\vtheta}(a\mid s)
    \right\|
    \leq C_1;
    \]
    
    \item[(ii)] the scores are mean-square continuous,
    \[
    \sup_{s\in\mathcal S}
    \mathbb E_{a\sim\pi_{\vtheta}(\cdot\mid s)}
    \left[
        \left\|
            \nabla\log\pi_{\vtheta}(a\mid s)
            -
            \nabla\log\pi_{\vtheta'}(a\mid s)
        \right\|^2
    \right]
    \leq
    L_S^2\left\|\vtheta-\vtheta'\right\|^2.
    \]
\end{enumerate}
\end{assumption}

Assumption~\ref{asm:score-regularity} implies Assumption~\ref{asm:score-moment} with $c_1=C_1$. Hence, under bounded rewards, it also guarantees the bounded-variance one-policy oracle required for the large-batch updates of DEF-PG.

\begin{remark}
Assumption~\ref{asm:score-regularity}(ii) is weaker than requiring the scores to be uniformly Lipschitz for every state-action pair. In particular, a uniform bound on the spectral norm,
$\|\nabla^2\log\pi_{\vtheta}(a\mid s)\|_{2}$,
is sufficient but is not assumed. Assumption~\ref{asm:score-regularity}(i) is satisfied, for example, by softmax policies with suitably bounded logit gradients. It generally fails for Gaussian policies.
\end{remark}

We now verify that defensive importance sampling realizes the coupled oracle of Assumption~\ref{asm:mss-oracle}. For $k\geq1$, let
$
M_k
\coloneqq
\mathbb E[H_\gamma^k]$,
where $H_\gamma$ is the geometrically distributed trajectory length introduced in Remark~\ref{rem:geometric-stopping}. In particular, $M_k=O((1-\gamma)^{-k})$.
For $\alpha\in(0,1)$, also let $m_\alpha\coloneqq\min\{\alpha,1-\alpha\}$.

\begin{restatable}[Defensive coupling]{lemma}{vardef}
\label{lem:vardef}
Let $\vtheta_1,\vtheta_2\in\Theta$ and
\[
q_\alpha(\tau)
=
\alpha p(\tau\mid\vtheta_1)
+
(1-\alpha)p(\tau\mid\vtheta_2).
\]
For $\tau\sim q_\alpha$, define
\[
\omega_i(\tau)
=
\frac{p(\tau\mid\vtheta_i)}{q_\alpha(\tau)},
\qquad
\widehat g_i(\tau)
=
\omega_i(\tau)g(\vtheta_i,\tau),
\qquad i\in\{1,2\}.
\]
Under Assumption~\ref{asm:score-regularity}, the defensive weights satisfy
\begin{align*}
\mathbb V_{\tau\sim q_\alpha}[\omega_1(\tau)]
&\leq
\frac{(1-\alpha)^2C_1^2M_1}{2m_\alpha}
\left\|\vtheta_1-\vtheta_2\right\|^2,
\\
\mathbb V_{\tau\sim q_\alpha}[\omega_2(\tau)]
&\leq
\frac{\alpha^2C_1^2M_1}{2m_\alpha}
\left\|\vtheta_1-\vtheta_2\right\|^2.
\end{align*}
Moreover, under Assumptions~\ref{asm:bounded-rewards} and~\ref{asm:score-regularity},
\[
\mathbb E_{\tau\sim q_\alpha}
\left[
    \left\|
        \widehat g_1(\tau)-\widehat g_2(\tau)
    \right\|^2
\right]
\leq
C_\alpha
\left\|\vtheta_1-\vtheta_2\right\|^2,
\]
where
\[
C_\alpha
\coloneqq
\frac{R_{\max}^2}{m_\alpha}
\left(
    C_1^4M_5+2L_S^2M_4
\right).
\]
Finally,
\[
\mathbb E_{\tau\sim q_\alpha}[\widehat g_i(\tau)]
=
\nabla J(\vtheta_i),
\qquad i\in\{1,2\}.
\]
Consequently, the defensive pair $(\widehat g_1,\widehat g_2)$ realizes the coupled oracle of Assumption~\ref{asm:mss-oracle} with
$L_{\mathrm{pair}}^2=C_\alpha$.
\end{restatable}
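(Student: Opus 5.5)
The plan rests on one device: a $\theta$-path for the likelihood ratio, with everything reduced to bounding derivatives along the segment $\vtheta_s=\vtheta_2+s(\vtheta_1-\vtheta_2)$, $s\in[0,1]$.

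\textbf{Unbiasedness.} This is immediate. Since $\alpha p(\tau\mid\vtheta_1)+(1-\alpha)p(\tau\mid\vtheta_2)>0$ wherever $p(\tau\mid\vtheta_i)>0$ (common support), we have
\[
\mathbb E_{q_\alpha}[\omega_i g(\vtheta_i,\tau)]=\mathbb E_{p(\cdot\mid\vtheta_i)}[g(\vtheta_i,\tau)]=\nabla J(\vtheta_i).
\]
The last equality is the unbiasedness of GPOMDP.

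\textbf{Weight variances.} Write $r(\tau)=p(\tau\mid\vtheta_2)/p(\tau\mid\vtheta_1)$. Then $\omega_1=1/(\alpha+(1-\alpha)r)$ and $\omega_2=r/(\alpha+(1-\alpha)r)$, so $\alpha\omega_1+(1-\alpha)\omega_2=1$. Hence $\omega_1-1=(1-\alpha)(\omega_1-\omega_2)$ and $\omega_2-1=-\alpha(\omega_1-\omega_2)$, and both variances are bounded by $(1-\alpha)^2$, respectively $\alpha^2$, times $\mathbb E_{q_\alpha}[(\omega_1-\omega_2)^2]$. It therefore suffices to show
\[
\mathbb E_{q_\alpha}[(\omega_1-\omega_2)^2]\le \frac{C_1^2M_1}{2m_\alpha}\|\vtheta_1-\vtheta_2\|^2 .
\]
Pointwise,
\[
(\omega_1-\omega_2)^2 q_\alpha=\frac{(p_1-p_2)^2}{q_\alpha}\le \frac{(p_1-p_2)^2}{m_\alpha(p_1+p_2)},
\]
so the quantity is at most $m_\alpha^{-1}$ times a symmetric triangular-discrimination-type divergence. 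I would bound this divergence by a Hellinger/KL-type quantity, using
\[
\frac{(p_1-p_2)^2}{p_1+p_2}\le 2(\sqrt{p_1}-\sqrt{p_2})^2\quad\text{(up to a constant)},
\]
and then integrate the derivative of $\sqrt{p(\tau\mid\vtheta_s)}$ along the path. This gives
\[
\int(\sqrt{p_1}-\sqrt{p_2})^2\le \frac14\int_0^1\mathbb E_{\vtheta_s}\bigl[\langle \nabla\log p(\tau\mid\vtheta_s),\vtheta_1-\vtheta_2\rangle^2\bigr]\,\de s .
\]
The trajectory score is a sum of $H_\gamma$ martingale-difference per-step scores, so its second moment is $\le C_1^2\mathbb E[H_\gamma]=C_1^2M_1$. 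Tracking the constants should yield the factor $\tfrac12$; if it does not, I will adjust the constant via the Fisher-information identity.

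\textbf{Gradient difference.} Decompose
\[
\widehat g_1-\widehat g_2=(\omega_1-\omega_2)g(\vtheta_1,\tau)+\omega_2\bigl(g(\vtheta_1,\tau)-g(\vtheta_2,\tau)\bigr),
\]
and apply $(a+b)^2\le 2a^2+2b^2$.

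For the first term, use the uniform bound
\[
\|g(\vtheta_1,\tau)\|\le R_{\max}C_1H_\gamma^2,
\]
which follows from Assumption~\ref{asm:score-regularity}(i) and Assumption~\ref{asm:bounded-rewards}. Combined with the path argument above, now weighting by $H_\gamma^4$, this term contributes roughly $R_{\max}^2C_1^4M_5\|\Delta\vtheta\|^2/m_\alpha$. The extra power of $H_\gamma$ comes from the score second moment conditional on length.

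For the second term, use $\omega_2^2q_\alpha\le p_2\,\omega_2\le p_2/(1-\alpha)\le p_2/m_\alpha$, so it is at most
\[
m_\alpha^{-1}\,\mathbb E_{p_2}\bigl[\|g(\vtheta_1,\tau)-g(\vtheta_2,\tau)\|^2\bigr].
\]
By Cauchy--Schwarz,
\[
\|g(\vtheta_1,\tau)-g(\vtheta_2,\tau)\|^2\le R_{\max}^2H_\gamma^3\sum_{\ell}\|\nabla\log\pi_{\vtheta_1}-\nabla\log\pi_{\vtheta_2}\|^2(A_\ell\mid S_\ell).
\]
Conditioning on states, with actions drawn from $\pi_{\vtheta_2}$, and applying Assumption~\ref{asm:score-regularity}(ii) gives $R_{\max}^2L_S^2M_4\|\Delta\vtheta\|^2$. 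Here $H_\gamma$ is independent of the trajectory.

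\textbf{Main obstacle.} The main difficulty is making the divergence step rigorous with exactly $C_1^2M_1/2$. This requires differentiating $\sqrt{p}$ under the integral, which is justified by the tacit regularity assumptions, and handling the length-weighted version needed for the $M_5$ term. The conclusion $L_{\mathrm{pair}}^2=C_\alpha$ is then immediate from the gradient-difference bound together with unbiasedness.
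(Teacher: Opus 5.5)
Your proposal is correct and follows essentially the same route as the paper: the bound $q_\alpha\ge m_\alpha(p_1+p_2)$, the inequality $(p_1-p_2)^2/(p_1+p_2)\le 2(\sqrt{p_1}-\sqrt{p_2})^2$, a length-weighted Hellinger bound obtained by differentiating $\sqrt{p(\tau\mid\vtheta_s)}$ along the segment, and the same two-term decomposition of $\widehat g_1-\widehat g_2$ with the same bounds on each term. The constants come out exactly as you outlined ($2\cdot\tfrac14=\tfrac12$ for the weights, and $M_5$, $M_4$ for the two gradient terms), so no adjustment via the Fisher identity is needed, and the inequality you flagged as holding ``up to a constant'' holds with constant exactly $2$.
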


Lemma~\ref{lem:vardef} establishes the oracle property needed by the variance-reduced update. Its proof uses neither Assumption~\ref{asm:var_iw} nor bounds on second derivatives of the policy scores.
Armed with this powerful technical tool, we can prove the following.

\begin{restatable}[Sample complexity of \textsc{Coupled-PAGE}]{lemma}{thmupper}
\label{thm:upper}
Suppose that Assumptions~\ref{asm:smooth-performance},
\ref{asm:bv-oracle}, and~\ref{asm:mss-oracle} hold, and let $\Delta
\coloneqq
J^\star -J(\vtheta_0)
<+\infty$.
For a target accuracy $\epsilon>0$, choose
\[
N
\geq
\max\left\{
    2,
    \left\lceil
        \frac{3\nu^2}{\epsilon^2}
    \right\rceil
\right\},
\qquad
1\leq B\leq N,
\qquad
p=\frac{1}{N},
\]
and
\[
\eta
=
\frac{1}{
    2\left(
        L+
        L_{\mathrm{pair}}
        \sqrt{\frac{N-1}{B}}
    \right)
}.
\]
If
\[
T
\geq
\max\left\{
    N,
    \left\lceil
        \frac{6\Delta}{\eta\epsilon^2}
    \right\rceil
\right\},
\]
then Algorithm~\ref{algo:coupled-page} returns a parameter satisfying $
\mathbb E
\left[
    \left\|
        \nabla J(\vtheta_{\mathrm{OUT}})
    \right\|
\right]
\leq\epsilon
$.
Its expected total number of oracle calls is at most
\[
N+
T\left(
    pN+(1-p)B
\right)
=
N+
T\left(
    1+B-\frac{B}{N}
\right).
\]
In particular, setting $B=1$ gives
\[
N_{\mathrm{TOT}}
=
O\left(
    \frac{\nu^2}{\epsilon^2}
    +
    \frac{\Delta L}{\epsilon^2}
    +
    \frac{\Delta L_{\mathrm{pair}}}{\epsilon^2}
    +
    \frac{\Delta L_{\mathrm{pair}}\nu}{\epsilon^3}
\right)
=
O(\epsilon^{-3}).
\]
\end{restatable}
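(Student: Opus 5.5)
We follow the standard PAGE analysis of \citet{li2021page}, adapted to gradient ascent and to the coupled oracle. Throughout, write $\nabla_t\coloneqq\nabla J(\vtheta_t)$ and $e_t\coloneqq\mathbb E\|v_t-\nabla_t\|^2$.

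\textbf{Step 1: one-step ascent.} By $L$-smoothness (Assumption~\ref{asm:smooth-performance}) and $\vtheta_{t+1}=\vtheta_t+\eta v_t$, the ascent lemma gives
\[
J(\vtheta_{t+1})\geq J(\vtheta_t)+\eta\langle\nabla_t,v_t\rangle-\tfrac{L\eta^2}{2}\|v_t\|^2 .
\]
Expanding $\langle\nabla_t,v_t\rangle=\tfrac12(\|\nabla_t\|^2+\|v_t\|^2-\|v_t-\nabla_t\|^2)$ yields
\[
J(\vtheta_{t+1})\geq J(\vtheta_t)+\tfrac{\eta}{2}\|\nabla_t\|^2+\bigl(\tfrac{\eta}{2}-\tfrac{L\eta^2}{2}\bigr)\|v_t\|^2-\tfrac{\eta}{2}\|v_t-\nabla_t\|^2 .
\]

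\textbf{Step 2: error recursion.} Condition on the past and on $\vtheta_{t+1}$, which is determined by $v_t$.
\begin{itemize}
\item With probability $p$ the estimator is refreshed. By Assumption~\ref{asm:bv-oracle} and independence of the calls, the error is at most $\nu^2/N$.
\item With probability $1-p$ we apply the coupled update. Its increment is an average of $B$ conditionally independent pairs $\widehat g_{t+1}^i-\widehat g_t^i$ with mean $\nabla_{t+1}-\nabla_t$. Hence $v_{t+1}-\nabla_{t+1}=(v_t-\nabla_t)+\text{(zero-mean noise)}$, and the cross term vanishes. The noise has second moment at most $\frac1B L_{\mathrm{pair}}^2\|\vtheta_{t+1}-\vtheta_t\|^2=\frac{L_{\mathrm{pair}}^2\eta^2}{B}\|v_t\|^2$, since variance is bounded by the second moment in Assumption~\ref{asm:mss-oracle}.
\end{itemize}
Combining the two cases gives
\[
e_{t+1}\leq \frac{p\nu^2}{N}+(1-p)\Bigl(e_t+\frac{L_{\mathrm{pair}}^2\eta^2}{B}\mathbb E\|v_t\|^2\Bigr),
\]
with $e_0\leq\nu^2/N$.

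\textbf{Step 3: potential function.} Define $\Phi_t\coloneqq J^\star-J(\vtheta_t)+\frac{\eta}{2p}e_t$. Subtracting the Step~2 recursion from the Step~1 inequality in expectation gives
\[
\mathbb E\Phi_{t+1}\leq\mathbb E\Phi_t-\tfrac{\eta}{2}\mathbb E\|\nabla_t\|^2+\tfrac{\eta\nu^2}{2N}-\Bigl(\tfrac{\eta}{2}-\tfrac{L\eta^2}{2}-\tfrac{(1-p)\eta^3L_{\mathrm{pair}}^2}{2pB}\Bigr)\mathbb E\|v_t\|^2 .
\]
The main obstacle is verifying that the coefficient of $\mathbb E\|v_t\|^2$ is nonnegative. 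With $p=1/N$ we have $(1-p)/p=N-1$, so the requirement becomes
\[
1-L\eta-\eta^2L_{\mathrm{pair}}^2\tfrac{N-1}{B}\geq0 .
\]
Set $a=L\eta$ and $b=\eta L_{\mathrm{pair}}\sqrt{(N-1)/B}$. The prescribed $\eta$ gives $a+b=1/2$, so $a\leq1/2$ and $b^2\leq1/4$, hence $a+b^2\leq3/4<1$.

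Telescoping over $t=0,\dots,T-1$ and using $\Phi_0\leq\Delta+\frac{\eta\nu^2}{2pN}=\Delta+\frac{\eta\nu^2}{2}$ and $\Phi_T\geq0$ yields
\[
\frac1T\sum_{t<T}\mathbb E\|\nabla_t\|^2\leq\frac{2\Delta}{\eta T}+\frac{\nu^2}{T}+\frac{\nu^2}{N}.
\]
Under the stated choices $T\geq 6\Delta/(\eta\epsilon^2)$, $T\geq N\geq3\nu^2/\epsilon^2$ and $N\geq3\nu^2/\epsilon^2$, each term is at most $\epsilon^2/3$. Jensen's inequality then gives $\mathbb E\|\nabla J(\vtheta_{\mathrm{OUT}})\|\leq\epsilon$.

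\textbf{Step 4: cost accounting.} The initial batch costs $N$ calls. Each iteration costs in expectation $pN+(1-p)B=1+B-B/N$ calls, which gives the stated total.

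\textbf{Step 5: rate with $B=1$.} Take $N=\Theta(\max\{2,\nu^2/\epsilon^2\})$. Then $1/\eta=O(L+L_{\mathrm{pair}}\sqrt N)$ and $T=O(N+\Delta(L+L_{\mathrm{pair}}\sqrt N)/\epsilon^2)$. Since the total cost is $O(T)$, substituting $\sqrt N=O(1+\nu/\epsilon)$ gives
\[
O\Bigl(\frac{\nu^2}{\epsilon^2}+\frac{\Delta L}{\epsilon^2}+\frac{\Delta L_{\mathrm{pair}}}{\epsilon^2}+\frac{\Delta L_{\mathrm{pair}}\nu}{\epsilon^3}\Bigr).
\]
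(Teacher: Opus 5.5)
Your proposal is correct and follows essentially the paper's argument. It uses the same smoothness ascent inequality, the same error recursion $\mathbb E\|e_{t+1}\|^2\le(1-p)\mathbb E\|e_t\|^2+\frac{(1-p)L_{\mathrm{pair}}^2\eta^2}{B}\mathbb E\|v_t\|^2+\frac{p\nu^2}{N}$, and arrives at the same final bound $\frac{2\Delta}{\eta T}+\frac{\nu^2}{T}+\frac{\nu^2}{N}$. The differences are cosmetic: you combine the inequalities through the potential $\Phi_t$, while the paper sums the error recursion first and substitutes it into the summed ascent inequality; and your bound $L\eta+\eta^2L_{\mathrm{pair}}^2\frac{N-1}{B}\le 3/4$ is looser than the paper's $\le 1/2$ but equally sufficient.
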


\begin{theorem}[Sample complexity of DEF-PG]
\label{cor:def-pg-upper}
Suppose that Assumptions~\ref{asm:smooth-performance},
\ref{asm:bounded-rewards}, and~\ref{asm:score-regularity} hold. Then DEF-PG (Algorithm~\ref{algo_ppgdef}) satisfies the conclusions of Lemma~\ref{thm:upper} with
\[
\nu^2
=
\frac{8C_1^2R_{\max}^2}{(1-\gamma)^3},
\]
and
\[
L_{\mathrm{pair}}^2
=
C_\alpha
=
\frac{R_{\max}^2}{\min\{\alpha,1-\alpha\}}
\left(
    C_1^4M_5+2L_S^2M_4
\right).
\]
Consequently, DEF-PG uses $O(\epsilon^{-3})$ stopped trajectories to find an $\epsilon$-FOSP.
\end{theorem}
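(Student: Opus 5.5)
The proof is a reduction to Lemma~\ref{thm:upper}: we show that DEF-PG is a faithful instantiation of \textsc{Coupled-PAGE}, with both oracle assumptions holding for the stated constants, and then read off the rate.

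\textbf{Step 1: one-policy oracle.} Assumption~\ref{asm:score-regularity}(i) bounds the score uniformly by $C_1$. Hence the second moment in Assumption~\ref{asm:score-moment} is at most $C_1^2$, so that assumption holds with $c_1=C_1$. Together with Assumption~\ref{asm:bounded-rewards}, Proposition~\ref{prop:gpomdp_var} then applies. A single on-policy stopped trajectory $\tau\sim p(\cdot\mid\vtheta)$ returning $g(\vtheta,\tau)$ is therefore unbiased with variance at most $\nu^2=8C_1^2R_{\max}^2/(1-\gamma)^3$. This verifies Assumption~\ref{asm:bv-oracle}. Independence across calls holds because the trajectories in the initial batch and in each refresh batch are sampled independently given the current parameter.

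\textbf{Step 2: coupled oracle.} In a recursive step, each $\tau_i$ is drawn by first sampling $C_i\sim\mathrm{Bernoulli}(\alpha)$ and then running $\pi_{\vtheta_{t+1}}$ or $\pi_{\vtheta_t}$. Its marginal law is therefore exactly $q_{\alpha}=\alpha p(\cdot\mid\vtheta_{t+1})+(1-\alpha)p(\cdot\mid\vtheta_t)$. The weights from \eqref{eq:defensive-weight-current}--\eqref{eq:defensive-weight-previous} coincide with $\omega_1,\omega_2$ of Lemma~\ref{lem:vardef} under the choice $\vtheta_1=\vtheta_{t+1}$, $\vtheta_2=\vtheta_t$. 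Lemma~\ref{lem:vardef} then gives the two unbiasedness properties and the mean-square bound with $L_{\mathrm{pair}}^2=C_\alpha$. This verifies Assumption~\ref{asm:mss-oracle}.

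\textbf{Step 3: matching the recursions.} The DEF-PG recursion is line-by-line the \textsc{Coupled-PAGE} recursion with these oracle realizations substituted, and each oracle call costs exactly one stopped trajectory. One subtlety needs checking: the oracle conditions must hold conditionally on the past. The pair $(\vtheta_{t+1},\vtheta_t)$ is determined by the history, whereas the fresh Bernoulli draws and trajectories are independent of it given those parameters. So the conditional expectations in Assumptions~\ref{asm:bv-oracle} and~\ref{asm:mss-oracle} hold with the parameters treated as fixed, and the filtration argument behind Lemma~\ref{thm:upper} goes through unchanged.

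\textbf{Step 4: conclusion and main concern.} Invoke Lemma~\ref{thm:upper} with $B=1$, $p=1/N$, $N=\Theta(\nu^2/\epsilon^2)$ and the prescribed $\eta,T$. The count becomes $O(\nu^2\epsilon^{-2}+\Delta L\epsilon^{-2}+\Delta\sqrt{C_\alpha}\,\epsilon^{-2}+\Delta\sqrt{C_\alpha}\,\nu\,\epsilon^{-3})=O(\epsilon^{-3})$ for fixed $\alpha\in(0,1)$, with $\gamma$ fixed as well. Here $\Delta\le 2R_{\max}/(1-\gamma)<\infty$ is needed for the hypothesis of Lemma~\ref{thm:upper}, and it follows from bounded rewards. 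I expect no step to be a genuine obstacle, since all the technical content is in Lemma~\ref{lem:vardef}. The only point requiring care is Step 3, namely that the mixture sampling and the measurability of the queried parameters match the conditional form of the oracle assumptions.
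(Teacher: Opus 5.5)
Your proposal is correct and follows the paper's proof: you get Assumption~\ref{asm:bv-oracle} from Proposition~\ref{prop:gpomdp_var} with $c_1=C_1$, get Assumption~\ref{asm:mss-oracle} from Lemma~\ref{lem:vardef} with $\vtheta_1=\vtheta_{t+1}$ and $\vtheta_2=\vtheta_t$, and then apply Lemma~\ref{thm:upper}. Your remarks on conditioning on the history, and on $\Delta\le 2R_{\max}/(1-\gamma)$ following from bounded rewards (the paper simply assumes $\Delta$ is finite), are accurate refinements of the same argument rather than a different route.
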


\begin{proof}
Assumption~\ref{asm:score-regularity} implies Assumption~\ref{asm:score-moment} with $c_1=C_1$. Proposition~\ref{prop:gpomdp_var} therefore verifies Assumption~\ref{asm:bv-oracle}, and Lemma~\ref{lem:vardef} verifies Assumption~\ref{asm:mss-oracle} for the defensive coupled samples. Applying Lemma~\ref{thm:upper} proves the result.
\end{proof}

The above result also provides a natural choice for the defensive parameter.

\begin{restatable}[Choice of the defensive parameter]{corollary}{corupper}
\label{cor:upper}
Among the defensive realizations in Theorem~\ref{cor:def-pg-upper}, the stated upper bound is minimized by $\alpha=1/2$.
\end{restatable}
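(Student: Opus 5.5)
The plan is to track how $\alpha$ enters the sample-complexity bound of Theorem~\ref{cor:def-pg-upper} and then minimize over $\alpha\in(0,1)$. The parameter $\alpha$ does not appear in $\nu^2$: the large-batch refreshes are on-policy GPOMDP samples, so $\nu^2=8C_1^2R_{\max}^2/(1-\gamma)^3$. It also does not appear in $L$, $\Delta$, or the cost per oracle call, which is one stopped trajectory regardless of $\alpha$. The only $\alpha$-dependence is through $L_{\mathrm{pair}}=\sqrt{C_\alpha}$.

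First, I will write out the bound of Lemma~\ref{thm:upper} with $B=1$:
\[
N_{\mathrm{TOT}} = O\!\left(\frac{\nu^2}{\epsilon^2}+\frac{\Delta L}{\epsilon^2}+\frac{\Delta L_{\mathrm{pair}}}{\epsilon^2}+\frac{\Delta L_{\mathrm{pair}}\nu}{\epsilon^3}\right).
\]
It is cleaner to work with the non-asymptotic expression $N+T(1+B-B/N)$ with $T=\max\{N,\lceil 6\Delta/(\eta\epsilon^2)\rceil\}$. Since $1/\eta=2(L+L_{\mathrm{pair}}\sqrt{(N-1)/B})$, this quantity is nondecreasing in $L_{\mathrm{pair}}$ when $N$ and $B$ are held fixed (up to the ceiling, which is also monotone). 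So the bound is a nondecreasing function of $L_{\mathrm{pair}}$, and the terms carrying $L_{\mathrm{pair}}$ are strictly increasing in it.

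Next, I substitute $L_{\mathrm{pair}}^2=C_\alpha = R_{\max}^2(C_1^4M_5+2L_S^2M_4)/m_\alpha$. The numerator is independent of $\alpha$, so minimizing $C_\alpha$ is the same as maximizing $m_\alpha=\min\{\alpha,1-\alpha\}$ over $(0,1)$. The function $m_\alpha$ increases on $(0,1/2]$ and decreases on $[1/2,1)$, so its unique maximizer is $\alpha=1/2$, where $m_{1/2}=1/2$ and $C_{1/2}=2R_{\max}^2(C_1^4M_5+2L_S^2M_4)$. By the monotonicity above, the stated upper bound is minimized at $\alpha=1/2$.

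The only delicate step is being precise about which bound is being minimized. With the ceilings and the $\max$ in $T$, the dependence on $L_{\mathrm{pair}}$ is only weakly monotone: for very large $N$, $T=N$ may be the binding term. So the claim should be stated as ``the bound is minimized (possibly not uniquely) at $\alpha=1/2$,'' or argued directly on the explicit expression $\frac{\Delta L_{\mathrm{pair}}\nu}{\epsilon^3}$ in the $O(\cdot)$ bound, which is strictly decreasing in $m_\alpha$. I would present it the second way, matching how the corollary is phrased.
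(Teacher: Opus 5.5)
Your proposal is correct and follows the paper's argument: $\alpha$ enters the bound only through $C_\alpha\propto 1/m_\alpha$, and $m_\alpha=\min\{\alpha,1-\alpha\}$ is uniquely maximized at $\alpha=1/2$. Your extra check that the explicit cost $N+T(1+B-B/N)$ is nondecreasing in $L_{\mathrm{pair}}$, only weakly so because of the ceiling and the $\max$ in $T$, makes explicit a step the paper leaves implicit.
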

\begin{remark}
    The choice $\alpha=1/2$ agrees with intuition, by the symmetry of the gradient estimator $v_{t+1}$ in Algorithm~\ref{algo_ppgdef}: any smaller choice of the defensive parameter would inject extra variance in the first term; any larger choice, in the third.
\end{remark}

Theorem~\ref{cor:def-pg-upper} matches the best known
$O(\epsilon^{-3})$ first-order rate~\citep[e.g.,][]{gargiani2022page}, without Assumption~\ref{asm:var_iw} or access to second-order information. 
Section~\ref{sec:lower} establishes, in a generalized black-box
policy-optimization model, lower bounds with matching dependence on
$\epsilon$: bounded-variance one-policy feedback yields the
$\epsilon^{-4}$ regime, whereas mean-square-smooth coupled two-policy
feedback yields the $\epsilon^{-3}$ regime.
Table~\ref{tab:summary} summarizes these results and separately identifies their concrete policy-gradient realizations.

\begin{table}[t]
\centering
\small
\caption{First-order sample complexity in the two oracle regimes. The upper bounds are Proposition~\ref{prop:one-policy-upper} and
Lemma~\ref{thm:upper}; the lower bounds are
Theorems~\ref{th:lower1} and~\ref{th:lower2}. Concrete assumptions are only employed in upper bounds. Assumption \ref{asm:var_iw} is never needed.}
\label{tab:summary}
\begin{tabularx}{\textwidth}{
    @{}l
    c
    l
    >{\centering\arraybackslash}X
    c
    c@{}}
\toprule
\bfseries Feedback
&
\shortstack{\bfseries Oracle\\\bfseries assumptions}
&
\bfseries Upper
&
\bfseries Lower
&
\bfseries \shortstack{\bfseries Concrete\\\bfseries Algorithm}
&
\shortstack{\bfseries Concrete\\\bfseries assumptions}
\\
\midrule
One-policy
&
\ref{asm:smooth-performance}, \ref{asm:bv-oracle}
&
$O(\epsilon^{-4})$
&
$\Omega(\epsilon^{-4})$
&
GPOMDP
&
\ref{asm:bounded-rewards}, \ref{asm:score-moment}
\\
Coupled
&
\ref{asm:smooth-performance}, \ref{asm:bv-oracle}, \ref{asm:mss-oracle}
&
$O(\epsilon^{-3})$
&
$\Omega(\epsilon^{-3})$
&
DEF-PG (Ours)
&
\ref{asm:bounded-rewards}, \ref{asm:score-regularity}
\\
\bottomrule
\end{tabularx}
\end{table}

%% file: src/Lower_Bounds.tex
\section{Sample Complexity Lower Bounds}\label{sec:lower}

In this section, we prove lower bounds on the sample complexity of first-order policy optimization in a black-box model where the learner selects policy parameters and receives only reward and policy-score feedback. We distinguish between bounded-variance one-policy feedback satisfying Assumption~\ref{asm:bv-oracle} and mean-square-smooth coupled two-policy feedback satisfying Assumptions~\ref{asm:bv-oracle} and~\ref{asm:mss-oracle}, obtaining lower bounds of $\Omega(\epsilon^{-4})$ and $\Omega(\epsilon^{-3})$, respectively. These lower bounds match the dependence on $\epsilon$ of the
corresponding upper bounds under the same oracle-level assumptions.
This matching is specific to the black-box feedback model and does not
by itself establish lower bounds for the classical observed-trajectory
MDP protocol. Nevertheless, the model captures the one-policy and
coupled first-order feedback structures underlying REINFORCE and
\algo, so the matching rates provide evidence that their different
complexities reflect a genuine distinction between these two forms of
policy-gradient feedback.
Our lower bounds build on the stochastic nonconvex optimization results of~\citet{arjevani2023lower}. All proofs are deferred to Appendix~\ref{app:proofs_lower}.

\paragraph{Feedback models. }
We first formalize the interaction model in which our lower bounds are
stated and then distinguish its one-policy and coupled feedback regimes.
\begin{definition}[Black-box policy-optimization model]
\label{def:black-box-policy-model}
Fix $\gamma\in(0,1)$ and a known measurable first-order decoder $\mathsf D$. A black-box policy-optimization instance specifies a parameter space $\Theta$, a differentiable stochastic policy class $\Pi_\Theta=\{\pi_{\vtheta}:\vtheta\in\Theta\}$, and a possibly parameter-dependent reward mechanism.
 Its performance function
is
\[
J(\vtheta)
\coloneqq
\mathbb E_{\vtheta}\!\left[
    \sum_{h=0}^{\infty}\gamma^hR_{h+1}
\right].
\]
We assume that $J$ is well defined and differentiable and that
$\mathbb E_{\vtheta}[\sum_{h\geq0}\gamma^h|R_{h+1}|]<+\infty$ for every
$\vtheta\in\Theta$.

When queried at $\vtheta$, the model internally generates hidden states
and actions satisfying $A_h\sim\pi_{\vtheta}(\cdot\mid S_h)$ and returns
only the reward-score record
\[
\mathcal Y(\vtheta;\zeta)
=
\left(R_{h+1},S_{\vtheta,h}\right)_{h\geq0},
\qquad
S_{\vtheta,h}
\coloneqq
\nabla\log\pi_{\vtheta}(A_h\mid S_h).
\]
The states, actions, policy densities, and internal random variables are
not returned.
The designated initial parameter is $\vtheta_0=\bm 0$. The completed first-order sample associated with a record is
$
\widehat g(\vtheta;\zeta)
\coloneqq
\mathsf D(\mathcal Y(\vtheta;\zeta))$.\footnote{
The decoder only identifies the stochastic-gradient statistic used to formulate the oracle conditions. The learner observes the complete record $\mathcal Y$ and is not restricted to using this statistic.}

\end{definition}

The policy class in Definition~\ref{def:black-box-policy-model} is genuine: the returned scores are the likelihood-ratio scores of the hidden actions actually sampled by the policy. However, the policy is available to the learner only through the black-box experiment. In particular, the learner cannot inspect the sampled actions, evaluate policy densities, or compute additional likelihood ratios.

\begin{definition}[One-policy feedback]
\label{def:one-policy-feedback}
At round $t$, the learner selects a single parameter $\vtheta_t$, possibly as a randomized function of all previous feedback, and receives an independent reward-score record
$\mathcal Y(\vtheta_t;\zeta_t)$. The instance belongs to the bounded-variance one-policy regime if the corresponding completed first-order samples satisfy Assumption~\ref{asm:bv-oracle}.
\end{definition}

The one-policy model captures methods that obtain feedback from a single
policy in each oracle call. Variance-reduced recursions additionally
compare feedback at two policies using shared randomness, which
motivates the following stronger interaction model.

\begin{definition}[Coupled two-policy feedback]
\label{def:coupled-feedback}
In addition to one-policy queries, the learner may select a pair
$(\vtheta_t,\vtheta_t')$. The model then uses common internal randomness $\zeta_t$ to return two reward-score records
$\mathcal Y_1(\vtheta_t,\vtheta_t';\zeta_t)$ and
$\mathcal Y_2(\vtheta_t,\vtheta_t';\zeta_t)$.

The marginal distribution of the first record is the same as that of a one-policy query at $\vtheta_t$, and the marginal distribution of the second is the same as that of a one-policy query at $\vtheta_t'$. Their associated completed samples are denoted by
$\widehat g_1(\vtheta_t,\vtheta_t';\zeta_t)$ and
$\widehat g_2(\vtheta_t,\vtheta_t';\zeta_t)$. The instance belongs to the coupled regime if these samples satisfy Assumption~\ref{asm:mss-oracle}. Different oracle calls use independent internal randomness.
\end{definition}

One one-policy experiment counts as one one-policy oracle call, and one jointly generated pair counts as one coupled oracle call.

\begin{remark}
\label{rem:black-box-scope}
The black-box model differs from the standard observed-trajectory
protocol because states and actions are hidden and the reward mechanism
may depend explicitly on the queried parameter. Nevertheless, its
performance function is the expected discounted return generated by
that reward mechanism.
\end{remark}

\paragraph{Gaussian policy realization.}
Let $Z\sim\mathsf P$ be independent of $\vtheta$, let
$f:\Theta\times\mathcal Z\to\Reals$ be differentiable in its first
argument, and define $F(\vtheta)\coloneqq\mathbb E[f(\vtheta,Z)]$.
We assume that $F$ is differentiable and that, for every
$\vtheta\in\Theta$,
$\mathbb E_Z[\nabla f(\vtheta,Z)]=\nabla F(\vtheta)$.

The construction uses an active state with policy
$\mathcal N(f(\vtheta,Z),1)$ and reward
$A_0-f(\vtheta,Z)$, followed by a payment state with a
parameter-independent dummy policy and reward $F(\vtheta)/\gamma$, and
then by a zero-reward absorbing state.
Its expected discounted return is therefore $F(\vtheta)$.

\begin{proposition}[Gaussian reward-score realization]
\label{prop:gaussian-realization}
There exists an infinite-horizon state-dependent Gaussian policy class
and a possibly parameter-dependent reward mechanism whose performance
function is $J(\vtheta)=F(\vtheta)$. Its observable reward-score record
is
\[
\mathcal Y(\vtheta;Z,\xi)
=
\left(
    (\xi,\xi\nabla f(\vtheta,Z)),
    (F(\vtheta)/\gamma,0),
    (0,0),(0,0),\ldots
\right),
\qquad
\xi\sim\mathcal N(0,1).
\]
The decoder
\[
\mathsf D(\mathcal Y)
=
\begin{cases}
S_{\vtheta,0}/R_1 &\text{if }
    R_1\neq0,\\
0
    &\text{otherwise}
\end{cases},
\]
satisfies $\widehat g(\vtheta;Z,\xi)=\nabla f(\vtheta,Z)$ almost surely.
\end{proposition}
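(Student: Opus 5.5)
We will realize the statement by an explicit three-stage construction and then verify each claim directly. Since the states and actions are hidden, we may design them freely. The first state $S_0$ is an \emph{active} state whose hidden content is the sample $Z\sim\mathsf P$; for instance, the initial state distribution is the law of $Z$, carried in a state space $\mathcal Z$. At this state the policy is $\pi_{\vtheta}(\cdot\mid S_0)=\mathcal N(f(\vtheta,S_0),1)$, and the reward is $R_1=A_0-f(\vtheta,Z)$, which depends on the queried parameter as the black-box model permits. The second state is a deterministic \emph{payment} state. There the policy is a fixed, parameter-independent distribution (say a point mass or a fixed Gaussian not depending on $\vtheta$), so its score is identically $0$, and the reward is $F(\vtheta)/\gamma$. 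After that comes a zero-reward absorbing state, again with a parameter-independent dummy policy, so every later reward and score is $0$.

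\textbf{Performance function.} Write $\xi\coloneqq A_0-f(\vtheta,Z)$. Conditionally on $Z$ we have $\xi\sim\mathcal N(0,1)$, so $\xi$ is independent of $Z$ and $\mathbb E[R_1]=0$. The discounted return is therefore $R_1+\gamma\cdot F(\vtheta)/\gamma+0=\xi+F(\vtheta)$, whose expectation is $F(\vtheta)$. The integrability requirement of Definition~\ref{def:black-box-policy-model} holds because $\mathbb E|\xi|<\infty$ and $F(\vtheta)$ is finite, and $J=F$ is differentiable by the hypothesis on $F$.

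\textbf{Observable record.} For the Gaussian log-density, $\nabla\log\pi_{\vtheta}(A_0\mid S_0)=(A_0-f(\vtheta,Z))\nabla f(\vtheta,Z)=\xi\nabla f(\vtheta,Z)$, using differentiability of $f$ in its first argument. Hence the first pair in the record is $(R_1,S_{\vtheta,0})=(\xi,\xi\nabla f(\vtheta,Z))$, the second is $(F(\vtheta)/\gamma,0)$, and all later pairs are $(0,0)$, which is exactly the displayed record. For the decoder, $\xi\neq0$ almost surely because $\xi$ is standard Gaussian, so $\mathsf D(\mathcal Y)=\xi\nabla f(\vtheta,Z)/\xi=\nabla f(\vtheta,Z)$ almost surely. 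The decoder is measurable, being a ratio of measurable functions on a measurable set together with $0$ on the complement.

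The only delicate point I expect is formal. The policy class must be a legitimate ``state-dependent Gaussian policy class'' whose scores are the true likelihood-ratio scores of the sampled hidden actions. This holds because the active-state mean $f(\vtheta,s)$ depends on the state $s=Z$, and the dummy policies contribute zero score. The reward $F(\vtheta)/\gamma$ depends on $\vtheta$ but not on the actions, which the black-box model explicitly allows.
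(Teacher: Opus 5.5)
Your proposal is correct and uses essentially the same construction as the paper's proof. It has an active state carrying $Z$ with policy $\mathcal N(f(\vtheta,Z),1)$ and reward $A_0-f(\vtheta,Z)=\xi$, a payment state with reward $F(\vtheta)/\gamma$, and a zero-reward absorbing state, and the decoder recovers $\nabla f(\vtheta,Z)$ because $\xi\neq0$ almost surely. Two small points: take the fixed-Gaussian option for the dummy policies (the paper uses $\mathcal N(0,1)$), since a point mass would not keep the class Gaussian as the statement requires; and tag the active states, e.g.\ as $(\mathsf{act},z)$, so they are distinct from the payment and absorbing states.
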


Proposition~\ref{prop:gaussian-realization} embeds the
stochastic-gradient oracle associated with $f$ into the black-box
policy-optimization model. We next show that this embedding also
preserves the shared-seed coupling needed for variance reduction.

\begin{proposition}[Coupled Gaussian reward-score realization]
\label{prop:coupled-gaussian-realization}
Suppose that the one-policy samples satisfy
Assumption~\ref{asm:bv-oracle},
and that, for every $\vtheta,\vtheta'\in\Theta$,
\[
\mathbb E_Z\!\left[
    \|\nabla f(\vtheta,Z)-\nabla f(\vtheta',Z)\|^2
\right]
\leq
\ell^2\|\vtheta-\vtheta'\|^2.
\]
Running two copies of Proposition~\ref{prop:gaussian-realization} with
the same $Z$ and $\xi$ produces decoded samples satisfying
Assumption~\ref{asm:mss-oracle} with
$L_{\mathrm{pair}}=\ell$.
\end{proposition}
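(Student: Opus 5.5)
The plan is to construct the coupled oracle explicitly as two copies of the realization of Proposition~\ref{prop:gaussian-realization} driven by one shared seed $\zeta_t=(Z,\xi)$. We then check the three requirements of Definition~\ref{def:coupled-feedback} and Assumption~\ref{asm:mss-oracle} in turn: correct marginals, unbiasedness, and the mean-square pairing bound.

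\textbf{Construction.} For a queried pair $(\vtheta,\vtheta')$, draw one $Z\sim\mathsf P$ and one $\xi\sim\mathcal N(0,1)$. In the first copy, the hidden action in the active state is $A_0=f(\vtheta,Z)+\xi$; in the second copy it is $A_0'=f(\vtheta',Z)+\xi$. The payment and absorbing states are left unchanged.

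\textbf{Marginals.} Each copy, considered alone, is exactly the one-policy experiment of Proposition~\ref{prop:gaussian-realization}. Its record is therefore distributed as a one-policy query at $\vtheta$ (respectively $\vtheta'$), as Definition~\ref{def:coupled-feedback} requires. The records are
\[
\mathcal Y_1=\bigl((\xi,\xi\nabla f(\vtheta,Z)),(F(\vtheta)/\gamma,0),\ldots\bigr),
\qquad
\mathcal Y_2=\bigl((\xi,\xi\nabla f(\vtheta',Z)),(F(\vtheta')/\gamma,0),\ldots\bigr).
\]
The first reward in both records is the same $\xi$, because the reward $A_0-f(\cdot,Z)$ equals $\xi$ in each copy.

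\textbf{Decoding and unbiasedness.} By Proposition~\ref{prop:gaussian-realization}, on the event $\{\xi\neq0\}$, which has probability one, we have $\widehat g_1=\nabla f(\vtheta,Z)$ and $\widehat g_2=\nabla f(\vtheta',Z)$. Both decodings hold simultaneously because they involve the same $\xi$. Unbiasedness follows from the standing assumption $\mathbb E_Z[\nabla f(\vtheta,Z)]=\nabla F(\vtheta)=\nabla J(\vtheta)$. The one-policy part of Assumption~\ref{asm:mss-oracle}, namely Assumption~\ref{asm:bv-oracle}, holds by hypothesis.

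\textbf{Pairing bound.} Almost surely,
\[
\|\widehat g_1-\widehat g_2\|^2=\|\nabla f(\vtheta,Z)-\nabla f(\vtheta',Z)\|^2 .
\]
Taking expectations over $Z$ and applying the mean-square-smoothness hypothesis gives the bound $\ell^2\|\vtheta-\vtheta'\|^2$, so $L_{\mathrm{pair}}=\ell$.

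\textbf{Independence across calls.} Different coupled calls use fresh, independent draws of $(Z,\xi)$, as required.

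The main subtlety is the decoder. The first reward of each record must be exactly $\xi$, so that both quotients $S_{\vtheta,0}/R_1$ are simultaneously well defined and the shared $\xi$ cancels. This requires the score of $\mathcal N(f(\vtheta,Z),1)$ at $A_0$ to be $(A_0-f(\vtheta,Z))\nabla f(\vtheta,Z)=\xi\nabla f(\vtheta,Z)$, which is the identity already used in Proposition~\ref{prop:gaussian-realization}. The null event $\{\xi=0\}$ does not affect any of the expectations above.
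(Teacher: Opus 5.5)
Your proposal is correct and follows the paper's proof. You share $(Z,\xi)$ between the two active states, note that the decoder returns $\nabla f(\vtheta,Z)$ and $\nabla f(\vtheta',Z)$ almost surely, and conclude from the mean-square-smoothness hypothesis; your checks of marginals, unbiasedness, and independence across calls spell out what the paper leaves implicit. One small correction: each quotient $S_{\vtheta,0}/R_1$ cancels its own $\xi$, so the pairing bound comes from the shared $Z$, and sharing $\xi$ is not what makes the decoding work.
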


The one-policy and coupled oracle properties of the construction
follow directly from the corresponding moment conditions on
$\nabla f(\vtheta,Z)$. The following corollary shows the resulting
oracle parameters.

\begin{corollary}[Oracle parameters of the Gaussian realization]
\label{cor:gaussian-oracle-parameters}
If, for every $\vtheta\in\Theta$,
\[
\mathbb E_Z\!\left[
    \|\nabla f(\vtheta,Z)-\nabla F(\vtheta)\|^2
\right]
\leq\sigma^2,
\]
then the one-policy samples satisfy
Assumption~\ref{asm:bv-oracle} with $\nu^2=\sigma^2$.
Under the condition of
Proposition~\ref{prop:coupled-gaussian-realization}, the coupled samples
satisfy Assumption~\ref{asm:mss-oracle} with
$L_{\mathrm{pair}}^2=\ell^2$.
\end{corollary}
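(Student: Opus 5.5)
The corollary is an immediate consequence of Proposition~\ref{prop:gaussian-realization} and Proposition~\ref{prop:coupled-gaussian-realization}. I would prove the two oracle claims separately.

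\emph{One-policy samples.} By Proposition~\ref{prop:gaussian-realization}, the decoded sample satisfies $\widehat g(\vtheta;Z,\xi)=\nabla f(\vtheta,Z)$ almost surely. Since $\xi\sim\mathcal N(0,1)$ is nonzero almost surely, the decoder's exceptional branch has probability zero. Unbiasedness then follows from the standing hypothesis $\mathbb E_Z[\nabla f(\vtheta,Z)]=\nabla F(\vtheta)$, together with $J=F$, which the same proposition provides. For the variance we have
\[
\mathbb E\left[\|\widehat g-\nabla J(\vtheta)\|^2\right]=\mathbb E_Z\left[\|\nabla f(\vtheta,Z)-\nabla F(\vtheta)\|^2\right]\le\sigma^2,
\]
so Assumption~\ref{asm:bv-oracle} holds with $\nu^2=\sigma^2$. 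Independence across calls comes from drawing fresh $(Z,\xi)$ for each query, as in Definition~\ref{def:one-policy-feedback}.

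\emph{Coupled samples.} Under the mean-square Lipschitz condition, I would invoke Proposition~\ref{prop:coupled-gaussian-realization} directly, which gives $L_{\mathrm{pair}}=\ell$ and hence $L_{\mathrm{pair}}^2=\ell^2$. To make the step transparent, the decoded pair is $(\nabla f(\vtheta,Z),\nabla f(\vtheta',Z))$ almost surely. The marginals coincide with those of one-policy queries, so each output is unbiased for $\nabla J$ at its own parameter. The squared difference has expectation at most $\ell^2\|\vtheta-\vtheta'\|^2$ by hypothesis.

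No step is a real obstacle. The only point needing care is the almost-sure identification through the decoder, namely that $R_1=\xi\neq0$ almost surely. This is already covered by Proposition~\ref{prop:gaussian-realization}.
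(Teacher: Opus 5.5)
Your proposal is correct and follows the paper's proof. Both identify the decoded samples almost surely with $\nabla f(\vtheta,Z)$, and in the coupled case with $\nabla f(\vtheta',Z)$, and then read off unbiasedness, the $\sigma^2$ variance bound and the $\ell^2\|\vtheta-\vtheta'\|^2$ mean-square-difference bound directly from the hypotheses and Proposition~\ref{prop:coupled-gaussian-realization}.
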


The proofs and the complete state, transition, policy, and reward
specification are given in Appendix~\ref{app:gaussian-realization}. In the following, we write $J^\star\coloneqq\sup_{\vtheta\in\Theta}J(\vtheta)$ and
$\Delta\coloneqq J^\star-J(\vtheta_0)$.

\paragraph{Sample complexity. }
We first consider the bounded-variance one-policy regime.

\begin{restatable}[One-policy oracle lower bound]{theorem}{badlower}
\label{th:lower1}
There exist numerical constants
$L_0,\nu_0,\Delta_0,\epsilon_0,c_{\mathrm{BV}}>0$,
independent of $\epsilon$, such that, for every
$\epsilon\in(0,\epsilon_0)$, there exist a dimension $d$ and a
distribution $\mathsf Q_{\epsilon}^{\mathrm{BV}}$ over black-box
policy-optimization instances on $\Theta=\Reals^d$, each having the
following properties:
\begin{enumerate}
    \item The policy class is the Gaussian policy
class described in
Proposition~\ref{prop:gaussian-realization};
    \item The performance function satisfies
    Assumption~\ref{asm:smooth-performance} with $L=L_0$ and has initial
    gap $\Delta\leq\Delta_0$;
    \item The completed one-policy samples satisfy
    Assumption~\ref{asm:bv-oracle} with $\nu=\nu_0$.
\end{enumerate}
For every possibly randomized adaptive algorithm operating under the
one-policy feedback model of
Definition~\ref{def:one-policy-feedback}, if
$T<c_{\mathrm{BV}}\epsilon^{-4}$, then
\[
\mathbb E_{\mathcal I\sim\mathsf Q_{\epsilon}^{\mathrm{BV}}}
\left[
    \left\|
        \nabla J_{\mathcal I}(\vtheta_{\mathrm{OUT}})
    \right\|
\right]
>
\epsilon,
\]
where the expectation also includes the black-box feedback and the
internal randomness of the algorithm. Consequently, for every such
algorithm there exists an instance in the support of
$\mathsf Q_{\epsilon}^{\mathrm{BV}}$ on which finding an
$\epsilon$-FOSP requires $\Omega(\epsilon^{-4})$ one-policy calls.
\end{restatable}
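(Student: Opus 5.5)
We will reduce Theorem~\ref{th:lower1} to the lower bound of \citet{arjevani2023lower} for stochastic nonconvex optimization with a bounded-variance stochastic first-order oracle. The key point is that the Gaussian reward-score realization of Proposition~\ref{prop:gaussian-realization} lets the black-box model simulate any stochastic gradient oracle. Conversely, every reward-score record can be simulated from a single stochastic gradient query. The plan has three steps. First, take the hard instance distribution of \citet{arjevani2023lower}. Second, embed it into the black-box policy-optimization model. Third, show that any one-policy algorithm yields an algorithm in the original oracle model that makes the same number of queries.

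\textbf{Step 1 (hard instances).} Recall the construction of \citet{arjevani2023lower}. For every $\epsilon$ there is a dimension $d=\Theta(\epsilon^{-2})$, a distribution over randomly rotated functions $F_U(\vtheta)=\lambda\bar F_T(U^\top\vtheta/\lambda')$ with $L_0$-smooth gradients and initial gap at most $\Delta_0$, and a stochastic gradient $\nabla f(\vtheta,Z)$. This gradient is the true gradient with the ``next'' coordinate randomly masked and rescaled, and it is unbiased with variance at most $\nu_0^2$. For this construction, any (randomized, adaptive) zero-respecting or general algorithm with fewer than $c\epsilon^{-4}$ queries outputs a point with $\|\nabla F_U\|>\epsilon$ with constant probability. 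We will use the version stated in expectation over the instance, and rescale constants so that the final bound is in expectation over the instance and is $>\epsilon$.

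\textbf{Step 2 (embedding).} For each $F_U$ we build the black-box instance of Proposition~\ref{prop:gaussian-realization}. Its performance function is $J=F_U$, so item~2 of the theorem holds with $L_0,\Delta_0$, using $\vtheta_0=\bm 0$ as in the hard construction. By Corollary~\ref{cor:gaussian-oracle-parameters}, the completed samples satisfy Assumption~\ref{asm:bv-oracle} with $\nu=\nu_0$, which gives item~3. Item~1 holds by construction. The instance distribution $\mathsf Q_\epsilon^{\mathrm{BV}}$ is the pushforward of the rotation distribution.

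\textbf{Step 3 (simulation).} This is the main obstacle, because the learner observes the full record $\mathcal Y$, not just the decoded gradient. In particular, it observes $R_2=F(\vtheta)/\gamma$, a zeroth-order value that the Arjevani oracle does not provide. We will handle it as follows.

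First, the record $(\xi,\xi\nabla f(\vtheta,Z))$ is a measurable function of $\nabla f(\vtheta,Z)$ and the independent draw $\xi$, which the simulator can generate itself.

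Second, for the payment term, we will argue that the hard functions' values are also revealed by the oracle in a way that preserves the progress argument. We first try to modify the reward mechanism so that the payment state pays an unbiased, progress-respecting estimate in place of $F(\vtheta)$. A natural choice is $f(\vtheta,Z)/\gamma$, which still gives $J=F$ and is computable from the same coordinates that the gradient query reveals. We would then invoke the fact that Arjevani et al.'s lower bound holds for oracles returning $(f,\nabla f)$ jointly, since their probability-$p$ zero-chain argument covers value queries as well.

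If the paper's fixed payment $F(\vtheta)/\gamma$ must be kept, we instead observe the following. The exact value $F_U(\vtheta)$ depends only on coordinates $U^\top\vtheta$ already "discovered", up to the next one, so it reveals no more progress than the exact gradient. The robust zero-chain argument of Arjevani et al.\ (with their $\bar F_T$ satisfying the progress-respecting property for values) then applies, at the cost of the noisy oracle's masking. We will check that the value reveals at most the same progress index as the noiseless gradient. Masked coordinates still get discovered with probability $p$ per query, so the $\Omega(\epsilon^{-4})$ count survives.

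Finally, the simulator is an algorithm in the stochastic-oracle model with $T$ queries and the same output distribution. Applying the lower bound with $T<c_{\mathrm{BV}}\epsilon^{-4}$ gives the claimed expectation inequality, and the existence of a bad instance in the support follows by averaging.
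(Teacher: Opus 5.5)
Your reduction has the same architecture as the paper's: push the hard distribution of \citet{arjevani2023lower} through the Gaussian realization, simulate each record from one source response, and contradict the source bound. As written, however, Step~1 cannot be fed into Step~2. The stochastic gradient you describe is $\nabla F$ with the coordinate after $\mathrm{prog}_{1/4}(\vtheta)$ rescaled by the masking variable. It has jump discontinuities in $\vtheta$ whenever the progress index changes, so it is not $\nabla f(\cdot,Z)$ for any differentiable sample function $f$. Proposition~\ref{prop:gaussian-realization} needs exactly such an $f$: the active-state policy is $\mathcal N(f(\vtheta,Z),1)$ and its score is $\xi\nabla f(\vtheta,Z)$. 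A non-gradient oracle therefore cannot be realized as a likelihood-ratio score. You must use the statistical-learning form of the lower bound (Proposition~1 of \citet{arjevani2023lower}). It supplies an $f$ with $g=\nabla f(\cdot,Z)$ and $F=\mathbb E_Z[f(\cdot,Z)]$ at the same rates, and the paper invokes precisely this.

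Second, the obstacle you single out in Step~3 does not exist. The source oracle returns the pair $(F(\vtheta),g(\vtheta,Z))$, including the exact objective value. The simulator can therefore compute the payment reward $F(\vtheta)/\gamma$ itself, and the record is simulated exactly, as in Lemma~\ref{lem:reward-score-simulation}. Your branch~(b) re-derives a property already built into the cited theorem. Branch~(a) should be discarded: it changes the reward mechanism and rests on an unproved claim that the bound survives revealing sample values $f(\vtheta,Z)$, which that oracle does not provide.

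Finally, the source complexity is measured at a queried point, so the simulator must spend one extra query at $\vtheta_{\mathrm{OUT}}$. The resulting $T+1$ is absorbed by taking $c_{\mathrm{BV}}$ to be half the source constant and shrinking $\epsilon_0$ so that $1\le c_{\mathrm{BV}}\epsilon^{-4}$.
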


The constants in Theorem~\ref{th:lower1} do not depend on $\epsilon$; only the dimension of the hard instance may grow as $\epsilon$ decreases. The lower bound therefore matches the one-policy upper bound in Proposition~\ref{prop:one-policy-upper} under the same oracle-level assumptions. Under additional concrete assumptions, GPOMDP and REINFORCE realize this bounded-variance oracle, although those primitive assumptions are not imposed on the lower-bound construction.

We next consider the stronger feedback model of Definition~\ref{def:coupled-feedback}, in which two policy experiments can share the same internal randomness. This coupling is the policy-optimization analogue of the shared-seed two-point stochastic oracle studied by~\citet{arjevani2023lower}. In the coupled experiment of Proposition~\ref{prop:coupled-gaussian-realization}, the two hidden policy experiments share both $Z$ and $\xi$.

\begin{restatable}[Coupled two-policy oracle lower bound]{theorem}{goodlower}
\label{th:lower2}
There exist numerical constants
$L_0,\nu_0,L_{\mathrm{pair},0},\Delta_0,\epsilon_0,
c_{\mathrm{MSS}}>0$,
independent of $\epsilon$, such that, for every
$\epsilon\in(0,\epsilon_0)$, there exist a dimension $d$ and a
distribution $\mathsf Q_{\epsilon}^{\mathrm{MSS}}$ over black-box
policy-optimization instances on $\Theta=\Reals^d$, each having the
following properties:
\begin{enumerate}
    \item The policy class is the Gaussian policy
class described in
Propositions~\ref{prop:gaussian-realization}
and~\ref{prop:coupled-gaussian-realization};
    \item The performance function satisfies
    Assumption~\ref{asm:smooth-performance} with $L=L_0$ and has initial
    gap $\Delta\leq\Delta_0$;
    \item The completed one-policy samples satisfy
    Assumption~\ref{asm:bv-oracle} with $\nu=\nu_0$;
    \item The coupled samples satisfy Assumption~\ref{asm:mss-oracle}
    with $L_{\mathrm{pair}}=L_{\mathrm{pair},0}$.
\end{enumerate}
For every possibly randomized adaptive algorithm operating under the
coupled feedback model of Definition~\ref{def:coupled-feedback}, if the
total number of black-box calls satisfies
$T<c_{\mathrm{MSS}}\epsilon^{-3}$, then
\[
\mathbb E_{\mathcal I\sim\mathsf Q_{\epsilon}^{\mathrm{MSS}}}
\left[
    \left\|
        \nabla J_{\mathcal I}(\vtheta_{\mathrm{OUT}})
    \right\|
\right]
>
\epsilon,
\]
where the expectation also includes the black-box feedback and the
internal randomness of the algorithm. Here, either a one-policy
experiment or a coupled two-policy experiment counts as one black-box
call. Consequently, for every such algorithm there exists an instance
in the support of $\mathsf Q_{\epsilon}^{\mathrm{MSS}}$ on which finding
an $\epsilon$-FOSP requires $\Omega(\epsilon^{-3})$ black-box calls.
\end{restatable}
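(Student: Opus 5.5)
The plan is to reduce Theorem~\ref{th:lower2} to the mean-square-smooth lower bound of \citet{arjevani2023lower} via the Gaussian embedding of Propositions~\ref{prop:gaussian-realization} and~\ref{prop:coupled-gaussian-realization}.

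\emph{Step 1: the stochastic construction.} For each $\epsilon$, take the hard instance of \citet{arjevani2023lower} for stochastic nonconvex optimization with a mean-square-smooth two-point oracle. It consists of a scaled, randomly rotated ``probability-zero-chain'' function $F(\vtheta)=\lambda\,\bar F_T(\rho^{-1}U^\top\vtheta)$, with the rotation $U$ drawn at random. It also has a stochastic gradient $\nabla f(\vtheta,Z)$ that reveals the next coordinate of the chain only with probability $p$, rescaled by $1/p$. Their parameter choices give the following, with constants independent of $\epsilon$ and $d$ growing as $\epsilon\to0$:
\begin{itemize}
\item[] $F$ is $L_0$-smooth, and $F(\bm 0)-\inf F\le\Delta_0$;
\item[] $\mathbb E\|\nabla f-\nabla F\|^2\le\nu_0^2$;
\item[] $\mathbb E\|\nabla f(\vtheta,Z)-\nabla f(\vtheta',Z)\|^2\le \ell^2\|\vtheta-\vtheta'\|^2$;
\item[] any zero-respecting, or randomized via random rotation, algorithm making fewer than $c\,\epsilon^{-3}$ two-point queries has $\mathbb E\|\nabla F(\vtheta_{\mathrm{OUT}})\|>\epsilon$.
\end{itemize}
Since we maximize, I would use $-F$, or equivalently flip signs; this swaps $\inf$ and $\sup$ and leaves the gap $\Delta$ unchanged.

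\emph{Step 2: embedding.} Define $\mathsf Q_\epsilon^{\mathrm{MSS}}$ as the push-forward of the distribution over rotations $U$ through the Gaussian realization of Proposition~\ref{prop:gaussian-realization}, so that $J_{\mathcal I}=F_U$. Items 1--2 of the theorem are then immediate. Corollary~\ref{cor:gaussian-oracle-parameters} gives item 3 with $\nu=\nu_0$, and Proposition~\ref{prop:coupled-gaussian-realization} gives item 4 with $L_{\mathrm{pair}}=\ell$.

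\emph{Step 3: simulation, the key step.} Given any black-box algorithm $\mathcal A$ in the coupled model of Definition~\ref{def:coupled-feedback}, I will build a stochastic-optimization algorithm $\mathcal A'$ with the same query count and the same output distribution. $\mathcal A'$ keeps private Gaussian seeds $\xi_t$. On a one-policy query at $\vtheta$, it asks the stochastic oracle for $\nabla f(\vtheta,Z_t)$. On a coupled query $(\vtheta,\vtheta')$, it asks the shared-seed two-point oracle for $(\nabla f(\vtheta,Z_t),\nabla f(\vtheta',Z_t))$. It then synthesizes the records $\left((\xi_t,\xi_t\nabla f),(F(\vtheta)/\gamma,0),(0,0),\ldots\right)$ and feeds them to $\mathcal A$.

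The obstacle is the payment reward $F(\vtheta)/\gamma$: it leaks zeroth-order information that the gradient oracle of \citet{arjevani2023lower} does not supply. I would handle this in one of two ways, trying the first before the second.

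\emph{(a) Extend the oracle.} Argue that their lower bound also holds when the oracle additionally returns the function value $F(\vtheta)$. Their proof of the probability-zero-chain argument goes through with value queries, since $\bar F_T$ at a point depends only on coordinates up to the current progress index, so values leak no new coordinate. I expect this to need a short verification against their definitions of zero-chain with value-and-gradient oracles, which they do treat.

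\emph{(b) Make the payment uninformative.} Replace the payment reward by a noisy unbiased version $f(\vtheta,Z)/\gamma$ that depends only on the same revealed coordinates.

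\emph{Step 4: conclusion.} Since the records are distributed exactly as in the black-box instance, the joint law of $(U,\vtheta_{\mathrm{OUT}})$ under $\mathcal A$ equals that under $\mathcal A'$. Hence $\mathbb E_{\mathcal I}\|\nabla J_{\mathcal I}(\vtheta_{\mathrm{OUT}})\|>\epsilon$ whenever $T<c_{\mathrm{MSS}}\epsilon^{-3}$. An averaging argument then yields a fixed instance in the support of $\mathsf Q_\epsilon^{\mathrm{MSS}}$ that requires $\Omega(\epsilon^{-3})$ calls.
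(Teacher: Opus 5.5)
Your proposal is correct and follows essentially the paper's route: you push the shared-seed ($K=2$) hard distribution of \citet{arjevani2023lower} forward through the Gaussian realization, and you simulate each black-box call by one oracle round with an independent $\xi_t$. Your option (a) is what the paper does, and it needs no extension, because the oracle of \citet{arjevani2023lower} already returns the exact value $J(\vtheta)$ alongside the stochastic gradient. Option (b) is therefore unnecessary.

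Two small bookkeeping points remain. In the $K=2$ model, a one-policy query is simulated by querying the repeated batch $(\vtheta_t,\vtheta_t)$. Also, their complexity is measured at queried points, so the simulating algorithm needs one extra round at $\vtheta_{\mathrm{OUT}}$; the paper absorbs this by setting $c_{\mathrm{MSS}}=\overline c_{\mathrm{MSS}}/2$ and shrinking $\epsilon_0$ so that $1\le c_{\mathrm{MSS}}\epsilon^{-3}$.
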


Together, Theorems~\ref{th:lower1} and~\ref{th:lower2} show that,
within the black-box model of
Definitions~\ref{def:black-box-policy-model}--\ref{def:coupled-feedback},
the optimal rate is
$\Theta(\epsilon^{-4})$ in the bounded-variance one-policy regime and
$\Theta(\epsilon^{-3})$ in the mean-square-smooth coupled regime. These
rates match the dependence on $\epsilon$ of the upper bounds in
Proposition~\ref{prop:one-policy-upper} and Lemma~\ref{thm:upper}.
Moreover, under standard policy-regularity conditions, REINFORCE and
\algo realize the respective oracle conditions and attain these upper
bounds.

This correspondence is at the oracle level and does not establish a
minimax separation under the classical MDP interaction model, which
reveals states and actions and permits the direct computation of
importance weights. Nevertheless, the black-box model captures the
one-policy and coupled first-order feedback structures underlying
REINFORCE and \algo. The matching rates therefore support the view that
the faster rate of \algo reflects a genuine benefit of off-policy feedback.

%% file: src/Related_Works.tex
\section{Related Works}\label{sec:disc}
In this section, we discuss and compare the most closely related results from the reinforcement learning literature.

\paragraph{Local convergence of first-order algorithms.}
The $O(\epsilon^{-4})$ convergence of REINFORCE/GPOMDP to a stationary point was known for a long time as it follows from standard stochastic approximation results~\citep{bertsekas1996neuro}, but a detailed and refined analysis can be found in the recent work by~\citet{yuan2022general}. For instance,~\citet{sutton1999policy} studied \emph{asymptotic} convergence under stronger regularity assumptions on the policy class.
The first applications of stochastic variance reduction to RL were for policy evaluation~\citep{du2017stochastic} and trust-region methods~\citep{xu2017stochastic}.
\citet{papini2018stochastic} were the first to apply stochastic variance reduction to a REINFORCE-like algorithm with the aim of improving the sample complexity for finding an $\epsilon$-FOSP, although they did not establish a better rate than $O(\epsilon^{-4})$ for their SVRPG algorithm.
SVRPG was based on SVRG~\citep{johnson2013accelerating}, and its analysis on previous works on nonconvex finite-sum optimization~\citep{reddi2016stochastic}, while SCSG~\citep{lei2017less}, sharing with SVRPG the \emph{infinite} sum nature of the objective function, was developed independently for supervised learning.
To analyze SVRPG, \citet{papini2018stochastic} introduced a uniform bound on the variance of ordinary importance weights, together with boundedness and smoothness conditions on the policy scores. These conditions later became standard in work adapting stochastic variance reduction to policy-gradient methods. They also assumed bounded variance of the on-policy gradient estimator; under bounded rewards and a bounded score second moment, this property follows from Proposition~\ref{prop:gpomdp_var}.
\citet{xu2019improved} later showed the sample complexity of SVRPG to be $O(\epsilon^{-10/3})$. The same authors~\citep{xu2020sample} proposed a new first-order algorithm, SRVR-PG, based on SARAH~\citep{nguyen2017sarah} and SPIDER~\citep{fang2018spider}, with $O(\epsilon^{-3})$ sample complexity, also considering constrained parameter spaces and parameter-based exploration. The $O(\epsilon^{-3})$ result was matched several times: by~\citet{yuan2020stochastic} with STORM-PG, based on STORM~\citep{cutkosky2019momentum}, which has the additional advantage of being \emph{single loop} (so is our algorithm); by~\citet{pham2020hybrid} with ProxHSPGA, also considering constrained and regularized problems; by~\citet{huang2020momentum} with IS-MBPG, based on STORM, considering adaptive learning rates; by~\citet{huang2022bregman}, that in their  VR-BGPO algorithm applied stochastic variance reduction to a mirror-descent scheme; and by~\citet{gargiani2022page} with PAGE-PG, based on PAGE, a more recent algorithm for nonconvex optimization~\citep{li2021page}. The last four algorithms have the advantage of using an $O(1)$ mini-batch size for the stochastic-gradient updates, compared to $O(\epsilon^{-1/2})$ for SRVR-PG (so does our algorithm). All of these works establish $O(\epsilon^{-3})$ sample complexity under Assumption~\ref{asm:var_iw}. To see why this assumption is unrealistic, consider the detailed discussion on the variance of importance weights for policy optimization by~\citet{metelli2020importance}. This issue was addressed by~\citet{zhang2021convergence}. Their TSIVR-PG algorithm, based on SARAH/SPIDER, achieves $O(\epsilon^{-3})$ sample complexity without any assumption on the variance of importance weights, by truncating the gradients. They also consider general utility functions that cannot be written as a sum of rewards. However, their analysis is tailored to softmax policies, which are not suitable for continuous control. Like TSIVR-PG, DEF-PG achieves $O(\epsilon^{-3})$ sample complexity without any assumption on the variance of ordinary importance weights. Its concrete guarantee applies to policy classes satisfying the boundedness and mean-square-continuity conditions in Assumption~\ref{asm:score-regularity}, without gradient truncation. The version presented in this paper is most closely related to PAGE-PG.
The same defensive-mixture construction can also be inserted into the
importance-weighted recursions of SVRPG, SRVR-PG, and STORM-PG.
Appendix~\ref{app:variants} gives the resulting pseudocode.

\paragraph{Local convergence of second order algorithms.}
Another way to get rid of Assumption~\ref{asm:var_iw} is to use second-order information, which avoids the use of importance weights entirely.\footnote{Curiously,~\citet{yang2022policy} claim $O(\epsilon^{-3})$ sample complexity for their VRMPO algorithm without using importance weights nor second-order information. We suspect that their proof may be affected by a subtle oversight where the distribution shift is ignored~(see Equation 51 in the proof of their Theorem 4.1).} The first to do so were~\citet{shen2019hessian}. Their HAPG algorithm enjoys $O(\epsilon^{-3})$ sample complexity for $\epsilon$-FOSP convergence and was developed independently from SRVR-PG. Efficient implementations allow to keep the per-iteration computational complexity linear in the dimension $d$ and the horizon $T$. A similar solution was proposed by~\citet{salehkaleybar2022adaptive}. Their SHARP algorithm has the advantage of using $O(1)$ batch sizes for the stochastic gradient updates, compared to $O(\epsilon^{-1})$ for HAPG. 

\paragraph{Lower bounds for local convergence.}
The $O(\epsilon^{-3})$ rate of SRVR-PG and its successors is believed to be optimal because of the matching lower bound for first-order nonconvex stochastic optimization established by~\citet{arjevani2023lower}. That result does not directly imply a policy-gradient lower bound: the feedback and sampling protocols differ, and a standard reinforcement-learning objective is generated by a policy interacting with a parameter-independent environment.

Our lower bounds provide an exact reduction in the black-box model of
Definition~\ref{def:black-box-policy-model}. The construction uses
genuine likelihood-ratio scores from the state-dependent Gaussian
policy class of Proposition~\ref{prop:gaussian-realization}, and its
objective is the expected discounted infinite-horizon return. The
reward mechanism may, however, depend explicitly on the queried policy
parameter. Establishing analogous lower bounds with parameter-independent
rewards under the standard observed-trajectory protocol remains open.

\paragraph{Global convergence.}
Several recent works study the convergence of policy gradient algorithms to a globally optimal policy, that is, $J^\star -J(\vtheta)\le \varepsilon$. Additional assumptions are needed since the performance function is, in general, nonconvex. A common such assumption is Fisher non-degeneracy, introduced by~\citet{liu2020improved}. Under this assumption, the latter established $O(\varepsilon^{-4})$ sample complexity for GPOMDP,  and $O(\varepsilon^{-3})$ for Natural Policy Gradient~\citep[NPG,][]{kakade2001natural}, SRVR-PG, and their own algorithm SRVR-NPG, that achieves a better dependence on variance upper bounds by combining NPG with SARAH-style variance reduction. Notably, Assumption~\ref{asm:var_iw} is still required by their analysis. Note that these sample complexity results are not comparable with the ones for $\epsilon$-FOSP, since $\epsilon$ and $\varepsilon$ measure two different things (gradient norms and performance gaps, respectively). Moreover, it is a counter-intuitive but well-known fact of optimization that fast convergence to $\varepsilon$-global-optima does not necessarily imply fast convergence to $\epsilon$-FOSP~\citep[e.g.,][]{zhu2018how}. Indeed,~\citet{fatkhullin2023improved} designed a first-order algorithm (N-PG-IGT) that, under Fisher non-degeneracy, converges to a globally optimal policy at a rate of $O(\varepsilon^{-2.5})$ \emph{without using importance sampling}. However, its rate to $\epsilon$-FOSP is $O(\epsilon^{-3.5})$, even worse than SVRPG. They also proposed a second-order algorithm (N-HARPG) with $O(\varepsilon^{-2})$ global convergence that matches the $O(\epsilon^{-3})$ FOSP convergence of HAPG and SHARP. Several other works consider global convergence, but a further discussion would be out of scope for this paper. Refer to~\citep{fatkhullin2023improved} for recent coverage of the topic. The classic $\Omega(\varepsilon^{-2})$ lower bound by~\citet{azar2013minimax} holds for this setting.

%% file: src/Conclusion.tex
\section{Conclusion}

We introduced DEF-PG, a first-order variance-reduced policy-gradient algorithm based on defensive importance sampling. At the oracle level, its recursion is an instance of \textsc{Coupled-PAGE} and achieves an $O(\epsilon^{-3})$ sample complexity under smoothness of the performance function, bounded-variance one-policy feedback, and mean-square-smooth coupled two-policy feedback. For the concrete discounted policy-gradient problem, we showed that bounded rewards and the boundedness and mean-square continuity of the policy scores are sufficient to realize these oracle conditions. In particular, DEF-PG does not require Assumption~\ref{asm:var_iw} on the variance of ordinary importance weights.

% Our numerical results demonstrate the practical advantages of defensive importance sampling relative to comparable algorithms based on ordinary importance weighting. In particular, they support the view that uncontrolled importance weights can substantially degrade empirical performance.

We also established lower bounds in the black-box policy-optimization
model. Within this model, the optimal dependence on the stationarity
tolerance is $\Theta(\epsilon^{-4})$ under bounded-variance one-policy
feedback and $\Theta(\epsilon^{-3})$ under mean-square-smooth coupled
two-policy feedback. Under standard policy-regularity conditions,
REINFORCE and \algo realize the respective oracle conditions and attain
the corresponding upper bounds. The matching exponents therefore
provide oracle-level evidence that the faster rate of variance-reduced
policy gradient is optimal and separated from that of REINFORCE.
These lower bounds do not, however, establish minimax optimality under
the primitive MDP assumptions used by the concrete policy-gradient
upper bounds. It remains open whether the same rates can be proved
necessary under the standard observed-trajectory protocol with
parameter-independent transition and reward mechanisms, where the
learner observes states and actions and may exploit information not
captured by our black-box formulation. It would also be interesting to
obtain lower-bound constructions satisfying the concrete reward and
score assumptions of Theorem~\ref{cor:def-pg-upper}, and to determine
the optimal dependence on the effective horizon $(1-\gamma)^{-1}$.

Further directions include studying more natural policy classes, extending the lower bounds to algorithms using second-order information following~\citet{arjevani2020second}, and analyzing the global-convergence properties of DEF-PG under conditions such as Fisher non-degeneracy.

%% file: src/Appendix.tex
\section{Proofs of Upper Bounds}\label{app:proofs_upper}

\subsection{Proofs of Propositions}\label{app:proofprop}
The results proved here are rather standard, but proofs are given for the sake of completeness.

\begin{proof}[Proof of Proposition~\ref{prop:one-policy-upper}]
By $L$-smoothness,
$\|\nabla J(\vtheta_0)\|^2\leq 2L\Delta$. Therefore, if
$2L\Delta\leq\epsilon^2$, the initial parameter is already an
$\epsilon$-FOSP.

Otherwise, let
$m=\max\{1,\lceil 2\nu^2/\epsilon^2\rceil\}$ and
$T=\lceil 4L\Delta/\epsilon^2\rceil$. At iteration $t$, average $m$
independent calls to $\mathcal O_{\mathrm{BV}}$ to obtain
$\overline g_t$, and set
$\vtheta_{t+1}=\vtheta_t+L^{-1}\overline g_t$. Writing
$e_t=\overline g_t-\nabla J(\vtheta_t)$, we have
$\mathbb E[e_t\mid\vtheta_t]=0$ and
$\mathbb E[\|e_t\|^2\mid\vtheta_t]\leq\nu^2/m$.
The smoothness inequality therefore gives
\[
\mathbb E[J(\vtheta_{t+1})\mid\vtheta_t]
\geq
J(\vtheta_t)
+
\frac{1}{2L}\|\nabla J(\vtheta_t)\|^2
-
\frac{\nu^2}{2Lm}.
\]
Summing over $t=0,\ldots,T-1$ and using
$J(\vtheta_T)-J(\vtheta_0)\leq\Delta$ yields
\[
\frac{1}{T}
\sum_{t=0}^{T-1}
\mathbb E[\|\nabla J(\vtheta_t)\|^2]
\leq
\frac{2L\Delta}{T}
+
\frac{\nu^2}{m}
\leq
\epsilon^2.
\]
Choosing $\vtheta_{\mathrm{OUT}}$ uniformly from
$\{\vtheta_t\}_{t=0}^{T-1}$ and applying Jensen's inequality gives the
claimed expected-norm guarantee.

Finally, since the nontrivial case satisfies
$\epsilon^2<2L\Delta$, the total number of oracle calls is
\[
Tm
=
O\left(
    \frac{\Delta L}{\epsilon^2}
    +
    \frac{\Delta L\nu^2}{\epsilon^4}
\right).
\]
\end{proof}

\begin{proof}[Proof of Proposition~\ref{prop:gpomdp_var}]
For $h\geq0$, define
\[
Z_h(\vtheta)
\coloneqq
\sum_{\ell=0}^{h}
\nabla\log\pi_{\vtheta}(A_\ell\mid S_\ell),
\qquad
I_h
\coloneqq
\mathbf 1\{H_\gamma>h\}.
\]
The stopped GPOMDP estimator can be written as
\[
g(\vtheta,\tau)
=
\sum_{h\geq0}I_hR_{h+1}Z_h(\vtheta).
\]

Since the stopping rule is independent of the trajectory and
$\mathbb E[I_h]=\gamma^h$,
\[
\mathbb E[g(\vtheta,\tau)]
=
\sum_{h\geq0}
\gamma^h
\mathbb E\left[
    R_{h+1}Z_h(\vtheta)
\right]
=
\nabla J(\vtheta).
\]

The score increments are martingale differences under the on-policy trajectory distribution. Therefore, Assumption~\ref{asm:score-moment} gives
\[
\mathbb E\left[
    \left\|Z_h(\vtheta)\right\|^2
\right]
\leq
(h+1)c_1^2.
\]
Using Minkowski's inequality in $L^2$, bounded rewards, and the independence of $I_h$,
\begin{align*}
\left(
    \mathbb E\left[
        \left\|g(\vtheta,\tau)\right\|^2
    \right]
\right)^{1/2}
&\leq
\sum_{h\geq0}
\left(
    \mathbb E\left[
        I_hR_{h+1}^2
        \left\|Z_h(\vtheta)\right\|^2
    \right]
\right)^{1/2}
\\
&\leq
R_{\max}c_1
\sum_{h\geq0}
\gamma^{h/2}\sqrt{h+1}.
\end{align*}
Letting $q=\sqrt{\gamma}$ and applying Cauchy--Schwarz,
\[
\left(
    \sum_{h\geq0}q^h\sqrt{h+1}
\right)^2
\leq
\left(\sum_{h\geq0}q^h\right)
\left(\sum_{h\geq0}(h+1)q^h\right)
=
\frac{1}{(1-q)^3}.
\]
Since
\[
1-\sqrt{\gamma}
=
\frac{1-\gamma}{1+\sqrt{\gamma}}
\geq
\frac{1-\gamma}{2},
\]
we obtain
\[
\mathbb E\left[
    \left\|g(\vtheta,\tau)\right\|^2
\right]
\leq
\frac{8c_1^2R_{\max}^2}{(1-\gamma)^3}.
\]
Finally,
\[
\mathbb E
\left[
    \left\|
        g(\vtheta,\tau)-\nabla J(\vtheta)
    \right\|^2
\right]
\leq
\mathbb E
\left[
    \left\|g(\vtheta,\tau)\right\|^2
\right],
\]
which proves the result.
\end{proof}

\subsection{Proof of Lemma~\ref{thm:upper}}

We first prove two auxiliary lemmas.

\begin{lemma}[Weighted Hellinger continuity]
\label{lem:weighted-hellinger}
Suppose that Assumption~\ref{asm:score-regularity}(i) holds. For every integer $r\geq0$ and every $\vtheta_1,\vtheta_2\in\Theta$,
\[
\int_{\mathcal T}
H_\gamma(\tau)^r
\left(
    \sqrt{p(\tau\mid\vtheta_1)}
    -
    \sqrt{p(\tau\mid\vtheta_2)}
\right)^2
\de\tau
\leq
\frac{C_1^2M_{r+1}}{4}
\left\|\vtheta_1-\vtheta_2\right\|^2.
\]
\end{lemma}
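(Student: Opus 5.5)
The plan is to reduce the weighted squared Hellinger distance to an integral of squared score differences along a straight-line path in parameter space. Fix $\vtheta_1,\vtheta_2$ and set $\vtheta(s)=\vtheta_2+s(\vtheta_1-\vtheta_2)$ for $s\in[0,1]$. For a fixed stopped trajectory $\tau$ of length $H_\gamma(\tau)=H$, Remark~\ref{rem:geometric-stopping} shows that $p(\tau\mid\vtheta)$ factorizes as a $\vtheta$-independent part (initial density, transitions, rewards, stopping probabilities) times $\prod_{h=0}^{H-1}\pi_{\vtheta}(A_h\mid S_h)$. Hence
\[
\frac{\mathrm d}{\mathrm ds}\sqrt{p(\tau\mid\vtheta(s))}
=\tfrac12\sqrt{p(\tau\mid\vtheta(s))}\,\Big\langle\sum_{h=0}^{H-1}\nabla\log\pi_{\vtheta(s)}(A_h\mid S_h),\,\vtheta_1-\vtheta_2\Big\rangle .
\]
This uses the tacit regularity conditions from Section~\ref{sec:prelim}: differentiability and common support, so the square root is differentiable wherever $p>0$.

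The next step is to apply the fundamental theorem of calculus in $s$ together with Cauchy--Schwarz (Jensen) over $s\in[0,1]$:
\[
\Big(\sqrt{p(\tau\mid\vtheta_1)}-\sqrt{p(\tau\mid\vtheta_2)}\Big)^2\le\frac14\int_0^1 p(\tau\mid\vtheta(s))\,\Big\langle\textstyle\sum_{h<H}\nabla\log\pi_{\vtheta(s)}(A_h\mid S_h),\vtheta_1-\vtheta_2\Big\rangle^2\,\mathrm ds .
\]
Assumption~\ref{asm:score-regularity}(i) bounds each score by $C_1$, so the squared inner product is at most $H^2C_1^2\|\vtheta_1-\vtheta_2\|^2$. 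Multiplying by $H^r$, integrating over $\mathcal T$, and swapping the integrals by Tonelli (everything is nonnegative) gives
\[
\frac{C_1^2\|\vtheta_1-\vtheta_2\|^2}{4}\int_0^1\mathbb E_{\tau\sim p(\cdot\mid\vtheta(s))}\big[H_\gamma^{r+2}\big]\,\mathrm ds .
\]
Because the stopping time is independent of the policy, the inner expectation is $M_{r+2}$ for every $s$.

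This crude bound has the wrong moment: it gives $M_{r+2}$, while the target is $M_{r+1}$. The main obstacle is therefore to gain one factor of $H$ back. The fix is to avoid bounding the sum of scores pointwise and instead use the martingale structure: under $p(\cdot\mid\vtheta(s))$ the score increments are martingale differences, as in the proof of Proposition~\ref{prop:gpomdp_var}. Concretely, I write
\[
\mathbb E\big[H^r\langle Z_{H-1},u\rangle^2\big]=\sum_{h}\mathbb P(H=h+1)(h+1)^r\,\mathbb E\big[\langle Z_h,u\rangle^2\big],
\]
where $Z_h$ is the partial score sum and $u=\vtheta_1-\vtheta_2$. This rewriting is valid because $H$ is independent of the trajectory up to its own geometric randomness. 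Orthogonality of the increments gives $\mathbb E[\langle Z_h,u\rangle^2]\le(h+1)C_1^2\|u\|^2$, and only the bound (i) is used on each increment. The sum then becomes $C_1^2\|u\|^2\,\mathbb E[H^{r+1}]=C_1^2M_{r+1}\|u\|^2$, and multiplying by the factor $1/4$ from the calculus step yields exactly the claimed constant $\frac{C_1^2M_{r+1}}{4}$.

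The points I would verify most carefully are two. First, the martingale-difference property of the scores under the intermediate policy $\vtheta(s)$, which holds since $\mathbb E_{a\sim\pi_{\vtheta}}[\nabla\log\pi_{\vtheta}(a\mid s)]=0$. Second, the justification for exchanging the derivative, the $s$-integral and the trajectory integral, which I take from the standing regularity conditions.
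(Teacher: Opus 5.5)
Your proposal is correct and follows essentially the same route as the paper: the straight-line path in parameter space, the identity $\frac{d}{ds}\sqrt{p}=\tfrac12\sqrt{p}\,\langle \nabla\log p, u\rangle$, the fundamental theorem of calculus combined with Cauchy--Schwarz, and then orthogonality of the martingale-difference score increments conditionally on the policy-independent length $H_\gamma$. That last step gives $C_1^2H_\gamma$ instead of the crude $C_1^2H_\gamma^2$, and hence $M_{r+1}$. The only cosmetic difference is that you keep $\langle Z_{H-1},u\rangle^2$ rather than first bounding it by $\|u\|^2\|Z_{H-1}\|^2$, which does not change the argument or the constant.
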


\begin{proof}
Let
\[
\vtheta_t
=
(1-t)\vtheta_2+t\vtheta_1,
\qquad
\boldsymbol{\delta}
=
\vtheta_1-\vtheta_2,
\qquad
t\in[0,1].
\]
Convexity of $\Theta$ guarantees that $\vtheta_t\in\Theta$. Because the stopping rule and the environment are independent of the policy parameters,
\[
\nabla\log p(\tau\mid\vtheta_t)
=
\sum_{h=0}^{H_\gamma(\tau)-1}
\nabla\log\pi_{\vtheta_t}(A_h\mid S_h).
\]
Consequently,
\[
\frac{\de}{\de t}
\sqrt{p(\tau\mid\vtheta_t)}
=
\frac{1}{2}
\sqrt{p(\tau\mid\vtheta_t)}
\left\langle
    \boldsymbol{\delta},
    \nabla\log p(\tau\mid\vtheta_t)
\right\rangle.
\]

By the fundamental theorem of calculus and Cauchy--Schwarz,
\begin{align*}
&\left(
    \sqrt{p(\tau\mid\vtheta_1)}
    -
    \sqrt{p(\tau\mid\vtheta_2)}
\right)^2
\\
&\qquad\leq
\frac{\|\boldsymbol{\delta}\|^2}{4}
\int_0^1
p(\tau\mid\vtheta_t)
\left\|
    \nabla\log p(\tau\mid\vtheta_t)
\right\|^2
\de t.
\end{align*}
Multiplying by $H_\gamma(\tau)^r$ and integrating over $\tau$ yields
\begin{align*}
&\int_{\mathcal T}
H_\gamma(\tau)^r
\left(
    \sqrt{p(\tau\mid\vtheta_1)}
    -
    \sqrt{p(\tau\mid\vtheta_2)}
\right)^2
\de\tau
\\
&\qquad\leq
\frac{\|\boldsymbol{\delta}\|^2}{4}
\int_0^1
\mathbb E_{\tau\sim p(\cdot\mid\vtheta_t)}
\left[
    H_\gamma^r
    \left\|
        \sum_{h=0}^{H_\gamma-1}
        \nabla\log\pi_{\vtheta_t}(A_h\mid S_h)
    \right\|^2
\right]
\de t.
\end{align*}
The score increments are martingale differences. Conditionally on $H_\gamma$, Assumption~\ref{asm:score-regularity}(i) therefore gives
\[
\mathbb E_{\tau\sim p(\cdot\mid\vtheta_t)}
\left[
    \left\|
        \sum_{h=0}^{H_\gamma-1}
        \nabla\log\pi_{\vtheta_t}(A_h\mid S_h)
    \right\|^2
    \,\middle|\,
    H_\gamma
\right]
\leq
C_1^2H_\gamma.
\]
Since the distribution of $H_\gamma$ is independent of $\vtheta_t$,
\[
\mathbb E
\left[
    H_\gamma^r
    \left\|
        \nabla\log p(\tau\mid\vtheta_t)
    \right\|^2
\right]
\leq
C_1^2M_{r+1}.
\]
Substitution proves the claim.
\end{proof}

\vardef*
\begin{proof}
For brevity, write
\[
p_i(\tau)=p(\tau\mid\vtheta_i),
\qquad
q_\alpha(\tau)=\alpha p_1(\tau)+(1-\alpha)p_2(\tau),
\qquad
m_\alpha=\min\{\alpha,1-\alpha\}.
\]
Since
\[
q_\alpha(\tau)
\geq
m_\alpha\bigl(p_1(\tau)+p_2(\tau)\bigr),
\]
we have
\begin{align}
\frac{(p_1-p_2)^2}{q_\alpha}
&\leq
\frac{1}{m_\alpha}
\frac{(p_1-p_2)^2}{p_1+p_2}
\nonumber\\
&=
\frac{1}{m_\alpha}
\left(\sqrt{p_1}-\sqrt{p_2}\right)^2
\frac{\left(\sqrt{p_1}+\sqrt{p_2}\right)^2}{p_1+p_2}
\nonumber\\
&\leq
\frac{2}{m_\alpha}
\left(\sqrt{p_1}-\sqrt{p_2}\right)^2.
\label{eq:triangular-hellinger}
\end{align}

Because $\mathbb E_{q_\alpha}[\omega_i]=1$ and
\[
p_1-q_\alpha=(1-\alpha)(p_1-p_2),
\qquad
p_2-q_\alpha=-\alpha(p_1-p_2),
\]
Equation~\eqref{eq:triangular-hellinger} and Lemma~\ref{lem:weighted-hellinger} with $r=0$ give
\begin{align*}
\mathbb V_{q_\alpha}[\omega_1]
&=
(1-\alpha)^2
\int_{\mathcal T}
\frac{(p_1-p_2)^2}{q_\alpha}
\de\tau
\\
&\leq
\frac{(1-\alpha)^2C_1^2M_1}{2m_\alpha}
\left\|\vtheta_1-\vtheta_2\right\|^2,
\\
\mathbb V_{q_\alpha}[\omega_2]
&\leq
\frac{\alpha^2C_1^2M_1}{2m_\alpha}
\left\|\vtheta_1-\vtheta_2\right\|^2.
\end{align*}

We now prove the coupled-estimator bound. Let $g_i(\tau)=g(\vtheta_i,\tau)$.
Bounded rewards and Assumption~\ref{asm:score-regularity}(i) imply the pathwise estimate
\begin{align}
\|g_i(\tau)\|
&\leq
R_{\max}C_1
\sum_{h=0}^{H_\gamma-1}(h+1)
\nonumber\\
&\leq
R_{\max}C_1H_\gamma^2.
\label{eq:pathwise-g-bound}
\end{align}

Moreover,
\[
g_1(\tau)-g_2(\tau)
=
\sum_{\ell=0}^{H_\gamma-1}
\left(
    \sum_{h=\ell}^{H_\gamma-1}R_{h+1}
\right)
\left(
    \nabla\log\pi_{\vtheta_1}(A_\ell\mid S_\ell)
    -
    \nabla\log\pi_{\vtheta_2}(A_\ell\mid S_\ell)
\right).
\]
Using Cauchy--Schwarz and Assumption~\ref{asm:score-regularity}(ii), for $i\in\{1,2\}$,
\begin{equation}
\mathbb E_{\tau\sim p_i}
\left[
    \|g_1(\tau)-g_2(\tau)\|^2
    \,\middle|\,
    H_\gamma
\right]
\leq
R_{\max}^2L_S^2H_\gamma^4
\left\|\vtheta_1-\vtheta_2\right\|^2.
\label{eq:g-difference-bound}
\end{equation}
The case $i=2$ follows by interchanging $\vtheta_1$ and $\vtheta_2$ in Assumption~\ref{asm:score-regularity}(ii).

Decompose the coupled difference as
\[
\widehat g_1-\widehat g_2
=
(\omega_1-\omega_2)g_1
+
\omega_2(g_1-g_2).
\]
For the first term, Equations~\eqref{eq:triangular-hellinger} and~\eqref{eq:pathwise-g-bound}, followed by Lemma~\ref{lem:weighted-hellinger} with $r=4$, give
\begin{align*}
\mathbb E_{q_\alpha}
\left[
    \|(\omega_1-\omega_2)g_1\|^2
\right]
&=
\int_{\mathcal T}
\frac{(p_1-p_2)^2}{q_\alpha}
\|g_1\|^2
\de\tau
\\
&\leq
\frac{2R_{\max}^2C_1^2}{m_\alpha}
\int_{\mathcal T}
H_\gamma^4
\left(\sqrt{p_1}-\sqrt{p_2}\right)^2
\de\tau
\\
&\leq
\frac{R_{\max}^2C_1^4M_5}{2m_\alpha}
\left\|\vtheta_1-\vtheta_2\right\|^2.
\end{align*}
For the second term, since
$q_\alpha\geq(1-\alpha)p_2$,
Equation~\eqref{eq:g-difference-bound} gives
\begin{align*}
\mathbb E_{q_\alpha}
\left[
    \|\omega_2(g_1-g_2)\|^2
\right]
&=
\int_{\mathcal T}
\frac{p_2^2}{q_\alpha}
\|g_1-g_2\|^2
\de\tau
\\
&\leq
\frac{1}{1-\alpha}
\mathbb E_{p_2}
\left[
    \|g_1-g_2\|^2
\right]
\\
&\leq
\frac{R_{\max}^2L_S^2M_4}{m_\alpha}
\left\|\vtheta_1-\vtheta_2\right\|^2.
\end{align*}
Therefore,
\begin{align*}
\mathbb E_{q_\alpha}
\left[
    \|\widehat g_1-\widehat g_2\|^2
\right]
&\leq
2\mathbb E_{q_\alpha}
\left[
    \|(\omega_1-\omega_2)g_1\|^2
\right]
+
2\mathbb E_{q_\alpha}
\left[
    \|\omega_2(g_1-g_2)\|^2
\right]
\\
&\leq
\frac{R_{\max}^2}{m_\alpha}
\left(
    C_1^4M_5+2L_S^2M_4
\right)
\left\|\vtheta_1-\vtheta_2\right\|^2.
\end{align*}

Finally, change of measure gives
\[
\mathbb E_{q_\alpha}[\widehat g_i]
=
\mathbb E_{p_i}[g_i]
=
\nabla J(\vtheta_i),
\qquad i\in\{1,2\}.
\]
This proves all the claims.
\end{proof}

The proof follows the standard PAGE error-recursion argument,
using Assumptions~\ref{asm:bv-oracle} and~\ref{asm:mss-oracle}
to control the refresh and recursive updates, respectively.

\thmupper*
\begin{proof}
Let
\[
e_t
\coloneqq
v_t-\nabla J(\vtheta_t).
\]
Conditionally on the history up to iteration $t$, a large-batch refresh consists of $N$ independent calls to the oracle of Assumption~\ref{asm:bv-oracle}. Hence, on a refresh iteration,
\[
\mathbb E
\left[
    \|e_{t+1}\|^2
    \,\middle|\,
    \vtheta_{t+1}
\right]
\leq
\frac{\nu^2}{N}.
\]

On a recursive iteration, the two outputs of each coupled call are unbiased for
$\nabla J(\vtheta_{t+1})$ and $\nabla J(\vtheta_t)$, respectively. Since the $B$ coupled calls are independent, Assumption~\ref{asm:mss-oracle} gives
\[
\mathbb E[\|e_{t+1}\|^2]
\leq
\mathbb E[\|e_t\|^2]
+
\frac{L_{\mathrm{pair}}^2}{B}
\mathbb E
\left[
    \|\vtheta_{t+1}-\vtheta_t\|^2
\right].
\]
Combining the two cases and using
$\vtheta_{t+1}-\vtheta_t=\eta v_t$ yields
\begin{equation}
\label{eq:oracle-error-recursion}
\mathbb E[\|e_{t+1}\|^2]
\leq
(1-p)\mathbb E[\|e_t\|^2]
+
\frac{(1-p)L_{\mathrm{pair}}^2\eta^2}{B}
\mathbb E[\|v_t\|^2]
+
\frac{p\nu^2}{N}.
\end{equation}

Since $v_0$ is the average of $N$ independent one-policy oracle calls,
\[
\mathbb E[\|e_0\|^2]
\leq
\frac{\nu^2}{N}.
\]
Summing Equation~\eqref{eq:oracle-error-recursion} for
$t=0,\ldots,T-1$ gives
\begin{equation}
\label{eq:summed-oracle-error}
\sum_{t=0}^{T-1}
\mathbb E[\|e_t\|^2]
\leq
\frac{\nu^2}{pN}
+
\frac{(1-p)L_{\mathrm{pair}}^2\eta^2}{pB}
\sum_{t=0}^{T-1}
\mathbb E[\|v_t\|^2]
+
\frac{T\nu^2}{N}.
\end{equation}

By Assumption~\ref{asm:smooth-performance},
\begin{align}
J(\vtheta_{t+1})
&\geq
J(\vtheta_t)
+
\eta
\left\langle
    \nabla J(\vtheta_t),
    v_t
\right\rangle
-
\frac{L\eta^2}{2}\|v_t\|^2
\nonumber\\
&=
J(\vtheta_t)
+
\frac{\eta}{2}
\|\nabla J(\vtheta_t)\|^2
-
\frac{\eta}{2}
\|e_t\|^2
+
\frac{\eta}{2}
(1-L\eta)\|v_t\|^2.
\label{eq:smooth-ascent}
\end{align}
Summing Equation~\eqref{eq:smooth-ascent}, using
$J(\vtheta_T)-J(\vtheta_0)\leq\Delta$, and substituting
Equation~\eqref{eq:summed-oracle-error}, we obtain
\begin{align*}
\frac{\eta}{2}
\sum_{t=0}^{T-1}
\mathbb E
\left[
    \|\nabla J(\vtheta_t)\|^2
\right]
&\leq
\Delta
+
\frac{\eta\nu^2}{2pN}
+
\frac{\eta T\nu^2}{2N}
\\
&\quad+
\frac{\eta}{2}
\left(
    \frac{(1-p)L_{\mathrm{pair}}^2\eta^2}{pB}
    -
    (1-L\eta)
\right)
\sum_{t=0}^{T-1}
\mathbb E[\|v_t\|^2].
\end{align*}

For $p=1/N$, let
\[
a
\coloneqq
L_{\mathrm{pair}}
\sqrt{\frac{N-1}{B}}.
\]
The selected step size satisfies
\[
L\eta+a^2\eta^2
\leq
\frac{1}{2},
\]
so the coefficient of
$\sum_t\mathbb E[\|v_t\|^2]$ is non-positive. Therefore,
\[
\frac{1}{T}
\sum_{t=0}^{T-1}
\mathbb E
\left[
    \|\nabla J(\vtheta_t)\|^2
\right]
\leq
\frac{2\Delta}{\eta T}
+
\frac{\nu^2}{pNT}
+
\frac{\nu^2}{N}.
\]
By the choices of $N$, $p$, and $T$, each term on the right-hand side is at most
$\epsilon^2/3$. Since $\vtheta_{\mathrm{OUT}}$ is selected uniformly,
\[
\mathbb E
\left[
    \|\nabla J(\vtheta_{\mathrm{OUT}})\|^2
\right]
\leq
\epsilon^2.
\]
Jensen's inequality gives the guarantee in expected norm.

Initialization costs $N$ one-policy calls. Every subsequent iteration uses, in expectation,
\[
pN+(1-p)B
=
1+B-\frac{B}{N}
\]
oracle calls, proving the stated complexity.
\end{proof}

\corupper*
\begin{proof}
The quantities $M_4$, $M_5$, $R_{\max}$, $C_1$, and $L_S$ are independent of $\alpha$. Hence, the dependence of the bound in Lemma~\ref{lem:vardef} on the defensive parameter is entirely through
\[
\frac{1}{m_\alpha}
=
\frac{1}{\min\{\alpha,1-\alpha\}}.
\]
This quantity is minimized when
\[
\alpha=1-\alpha,
\]
that is, at $\alpha=1/2$.
\end{proof}

\section{Proofs of the Lower Bounds}
\label{app:proofs_lower}

We first record the consequences of the stochastic-optimization lower bounds of~\citet{arjevani2023lower} needed here. We then show that their oracle feedback can be represented exactly by the reward-score experiments of Propositions~\ref{prop:gaussian-realization} and~\ref{prop:coupled-gaussian-realization}.

\subsection{Source stochastic-optimization lower bounds}
\label{app:source-lower-bounds}

The oracle model of~\citet[Section~2]{arjevani2023lower} returns the exact objective value together with a stochastic gradient. For a batch
$(\vtheta_1,\ldots,\vtheta_K)$, it draws one random seed $Z$ and returns
\[
\left(
    J(\vtheta_k),
    g(\vtheta_k,Z)
\right)_{k=1}^K.
\]
Thus, the $K$ stochastic-gradient evaluations within one round share the same seed. Their complexity measure counts the number of such rounds, with $K$ treated as fixed.

\begin{proposition}[Consequence of \citet{arjevani2023lower}]
\label{prop:arjevani-specialization}
There exist numerical constants
$\Delta_0,L_0,\sigma_0,\overline L_0,
\epsilon_0,\overline c_{\mathrm{BV}},
\overline c_{\mathrm{MSS}}>0$,
independent of $\epsilon$, with the following properties.
For every $\epsilon\in(0,\epsilon_0)$, there is a dimension $d$ and a
distribution $\mathsf P_{\epsilon}^{\mathrm{BV}}$ over
stochastic-optimization instances. Each instance consists of a
differentiable maximization objective
$J:\Reals^d\to\Reals$, a seed distribution $\mathsf P$ on
$\mathcal Z$, and a sample objective
$f:\Reals^d\times\mathcal Z\to\Reals$ such that
\[
J(\vtheta)
=
\mathbb E_{Z\sim\mathsf P}[f(\vtheta,Z)],
\qquad
g(\vtheta,Z)
=
\nabla f(\vtheta,Z).
\]
Every instance in the support of
$\mathsf P_{\epsilon}^{\mathrm{BV}}$ satisfies:
\begin{enumerate}
    \item $J^\star-J(\bm 0)\leq\Delta_0$ and $J$ is $L_0$-smooth;
    \item $\mathbb E_Z[g(\vtheta,Z)]=\nabla J(\vtheta)$;
    \item
    $\mathbb E_Z[
        \|g(\vtheta,Z)-\nabla J(\vtheta)\|^2
    ]\leq\sigma_0^2$.
\end{enumerate}
For every randomized adaptive algorithm using the oracle with $K=1$,
if the source instance is drawn from
$\mathsf P_{\epsilon}^{\mathrm{BV}}$ and the algorithm uses fewer than
$\overline c_{\mathrm{BV}}\epsilon^{-4}$ rounds, then
$\mathbb E[
    \|\nabla J(\vtheta_{\mathrm{OUT}})\|
]
>
\epsilon$. Here and below, the expectation includes the random draw of the source
instance, the oracle randomness, and the internal randomness of the
algorithm.

There is also a distribution
$\mathsf P_{\epsilon}^{\mathrm{MSS}}$ over
stochastic-optimization instances satisfying the preceding properties
and
\[
\mathbb E_Z\!\left[
    \left\|
        g(\vtheta,Z)-g(\vtheta',Z)
    \right\|^2
\right]
\leq
\overline L_0^2
\left\|\vtheta-\vtheta'\right\|^2.
\]
For every randomized adaptive algorithm using the shared-seed oracle
with $K=2$, if the source instance is drawn from
$\mathsf P_{\epsilon}^{\mathrm{MSS}}$ and the algorithm uses fewer than
$\overline c_{\mathrm{MSS}}\epsilon^{-3}$ rounds, then $\mathbb E[
    \|\nabla J(\vtheta_{\mathrm{OUT}})\|
]
>
\epsilon$.
\end{proposition}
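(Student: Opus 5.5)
The statement is Proposition~\ref{prop:arjevani-specialization}, and my plan is to derive it by specializing the main theorems of \citet{arjevani2023lower} to our sign convention and parameter scaling, rather than proving anything from scratch. Their results are stated for minimization of $F$ with $\inf F$ bounded and $F(\bm 0)-\inf F\leq\Delta$. I will apply them to $F=-J$ and $\tilde f=-f$. Under this map, smoothness, the stochastic-gradient variance, and mean-square smoothness are unchanged, and $F(\bm 0)-\inf F=J^\star-J(\bm 0)$. Hence properties 1--3 and the mean-square-smoothness bound transfer verbatim.

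For the bounded-variance part I will invoke their Theorem~3 (the $K=1$ lower bound). For parameters $(\Delta,L,\sigma)$ and accuracy $\epsilon$, it gives a distribution over instances, built from a randomly rotated scaled ``probability-zero-chain'' function with a stochastic gradient oracle that reveals the next coordinate with probability $p$. Any randomized algorithm then needs $c\,\Delta L\sigma^2\epsilon^{-4}$ queries to reach $\mathbb E\|\nabla F\|\leq\epsilon$, provided $\epsilon\leq c'\sqrt{L\Delta}$. I will fix numerical values $\Delta_0,L_0,\sigma_0$ (say all equal to $1$), set $\epsilon_0=c'\sqrt{L_0\Delta_0}$ and $\overline c_{\mathrm{BV}}=c\,\Delta_0L_0\sigma_0^2$, and let the dimension be the one their construction prescribes, of order $\epsilon^{-2}$ up to rotation overhead. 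I will also check that their oracle is of the form $g=\nabla f(\cdot,Z)$ for a differentiable sample objective $f$ with $\mathbb E_Z f=F$. Their oracle multiplies one coordinate's gradient by $\xi/p$ with $\xi\sim\mathrm{Bernoulli}(p)$, and this is the gradient of a correspondingly reweighted $\tilde f$, so unbiasedness and the representation hold. I will also check that returning exact function values only strengthens the oracle, so their bound covers our weaker feedback.

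For the mean-square-smooth part I will use their Theorem~4 (the $K\geq 2$ shared-seed oracle with mean-squared smoothness $\overline L$). That result gives query complexity $c\,\Delta\overline L\sigma\epsilon^{-3}$, with rounds counted with $K$ fixed, so I will take $K=2$. The constants are fixed analogously, with $\overline c_{\mathrm{MSS}}$ absorbing any $K$-dependence and $\epsilon_0$ taken as the minimum of the two thresholds. The smoothness of $J$ itself, $L_0$, follows because mean-square smoothness implies $\|\nabla F(x)-\nabla F(y)\|\leq\overline L\|x-y\|$ by Jensen.

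The main obstacle is bookkeeping: their statements quantify over a worst-case instance, whereas ours uses an expectation over a fixed distribution $\mathsf P_\epsilon$. I will take the randomized-rotation distribution from their proofs, where the bound is established in expectation over the random orthogonal matrix before the max-over-instances step, so that the expected-norm guarantee holds directly. I will also confirm that their constants do not depend on $\epsilon$, and that the $(1-\delta)$ probability or expectation form they state implies $\mathbb E\|\nabla J(\vtheta_{\mathrm{OUT}})\|>\epsilon$ after rescaling $\epsilon$ by a constant factor.
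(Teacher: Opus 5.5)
Your overall plan matches the paper's. You negate to pass from minimization to maximization, fix positive numerical values of $\Delta,L,\sigma,\overline L$ so that only the powers $\epsilon^{-4}$ and $\epsilon^{-3}$ remain, and use the fact that the hard instance distribution is fixed before the algorithm. The paper gets that last point directly from the fact that the complexity measures of \citet{arjevani2023lower} are distributional: the suprema over oracle and instance distribution precede the infimum over algorithms, so you need not reopen their random-rotation argument.

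However, the step where you verify the statistical-learning form $g=\nabla f(\cdot,Z)$ fails as written. In the probability-zero-chain oracle, the mask $\mathbb 1\{i>\mathrm{prog}_{1/4}(x)\}$ decides which coordinates are multiplied by $\xi/p$. This mask changes discontinuously with $x$, so $g(\cdot;\xi)$ is not a gradient field. For a concrete example, take the event $\xi=0$, which has probability $1-p$. Fix $x_1=\dots=x_{j-1}=1$ and $x_k=0$ for $k>j$, and let $x_j=s$ increase from $0$ to $1/2$. For $s<1/4$, coordinate $j$ is the masked one, so the $j$-th component of $g$ is $0$. For $s>1/4$, it equals $\nabla_jF_T(x)=-\Psi(1)\Phi'(s)\neq0$. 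A partial derivative of a differentiable function cannot jump like this (Darboux). Hence there is no ``correspondingly reweighted'' differentiable $\tilde f$ with $\nabla\tilde f=g$. You also give no argument at all for the statistical-learning form of the mean-square-smooth construction.

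This is not cosmetic. The statement requires each instance to carry a differentiable sample objective $f$ with $g=\nabla f$. The Gaussian realization downstream uses exactly this: the policy mean is $f(\vtheta,Z)$, and the observed score is $\xi\nabla f(\vtheta,Z)$. The missing ingredient is the separate statistical-learning result of \citet{arjevani2023lower}, which the paper cites as their Proposition~1. It modifies the construction so that the stochastic gradients are exact gradients of a random sample function. It also preserves unbiasedness, the variance and mean-square-smoothness bounds, and the lower bounds in both regimes. If you invoke that result in place of your direct check, the rest of your argument goes through.
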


\begin{proof}
Theorem~3 of~\citet{arjevani2023lower} lower-bounds the
distributional complexities defined in their Equations~(7)--(8).
In these definitions, the suprema over the oracle and the distribution
over objectives precede the infimum over randomized algorithms.
Consequently, for every fixed $K$ and target accuracy $\epsilon$, their
construction provides an oracle and a distribution over objectives
that are chosen independently of the algorithm.

Theorem~3 gives the lower bounds
\[
\Omega(\Delta L\sigma^2\epsilon^{-4})
\]
in the bounded-variance regime and
\[
\Omega\!\left(
    \Delta\overline L\sigma\epsilon^{-3}
    +
    \sigma^2\epsilon^{-2}
\right)
\]
in the mean-square-smooth regime, for every fixed number $K$ of
simultaneous queries. Fixing positive numerical values of
$\Delta$, $L$, $\sigma$, and $\overline L$ gives the stated powers of
$\epsilon$, with constants independent of $\epsilon$.

Proposition~1 of~\citet{arjevani2023lower} shows that Theorem~3 remains
valid when the oracle is restricted to statistical-learning instances
satisfying
\[
g(\vtheta,Z)=\nabla f(\vtheta,Z),
\qquad
J(\vtheta)=\mathbb E_Z[f(\vtheta,Z)].
\]
Their results are stated for minimization; replacing both the
population and sample objectives by their negatives gives the
maximization formulation above without changing any gradient norm,
variance, or mean-square-smoothness constant.
\end{proof}

%%%
\subsection{Proofs of the Gaussian realization results}
\label{app:gaussian-realization}

\begin{proof}[Proof of Proposition~\ref{prop:gaussian-realization}]
Let $\mathcal A=\Reals$ and
\[
\mathcal S
=
\bigl(\{\mathsf{act}\}\times\mathcal Z\bigr)
\cup
\{s_{\mathsf{pay}},s_{\mathsf{abs}}\}.
\]
The initial state is $S_0=(\mathsf{act},Z)$, where $Z\sim\mathsf P$.
Define
\[
\pi_{\vtheta}(\cdot\mid s)
=
\begin{cases}
\mathcal N(f(\vtheta,z),1),
    & s=(\mathsf{act},z),\\
\mathcal N(0,1),
    & s\in\{s_{\mathsf{pay}},s_{\mathsf{abs}}\}.
\end{cases}
\]
The transitions are deterministic:
$(\mathsf{act},z)\to s_{\mathsf{pay}}\to s_{\mathsf{abs}}$, with
$s_{\mathsf{abs}}$ absorbing. Define
\[
r_{\vtheta}(s,a)
=
\begin{cases}
a-f(\vtheta,z),
    & s=(\mathsf{act},z),\\
F(\vtheta)/\gamma,
    & s=s_{\mathsf{pay}},\\
0,
    & s=s_{\mathsf{abs}}.
\end{cases}
\]

Writing $A_0=f(\vtheta,Z)+\xi$ with
$\xi\sim\mathcal N(0,1)$ gives $R_1=\xi$ and
$R_2=F(\vtheta)/\gamma$. Hence,
\[
\mathbb E_{\vtheta}\!\left[
    \sum_{h=0}^{\infty}\gamma^hR_{h+1}
\right]
=
\mathbb E\!\left[
    \xi+\gamma\frac{F(\vtheta)}{\gamma}
\right]
=
F(\vtheta).
\]

At the active state,
$S_{\vtheta,0}=\xi\nabla f(\vtheta,Z)$. The dummy policies are
parameter-independent, so $S_{\vtheta,h}=0$ for every $h\geq1$.
Therefore, the observable record has the stated form. Since
$\mathbb P(\xi=0)=0$, the decoder returns
$\nabla f(\vtheta,Z)$ almost surely.
\end{proof}

\begin{proof}[Proof of Proposition~\ref{prop:coupled-gaussian-realization}]
For a coupled query $(\vtheta,\vtheta')$, use the same $Z$ and $\xi$ in
the two active states. The resulting records are
\begin{align*}
\mathcal Y_1
=
\left(
    (\xi,\xi\nabla f(\vtheta,Z)),
    (F(\vtheta)/\gamma,0),
    (0,0),(0,0),\ldots
\right),
\\
\mathcal Y_2
=
\left(
    (\xi,\xi\nabla f(\vtheta',Z)),
    (F(\vtheta')/\gamma,0),
    (0,0),(0,0),\ldots
\right).
\end{align*}
Their decoded samples are $\nabla f(\vtheta,Z)$ and
$\nabla f(\vtheta',Z)$ almost surely. The result follows from the
assumed mean-square-smoothness inequality.
\end{proof}

\begin{proof}[Proof of Corollary~\ref{cor:gaussian-oracle-parameters}]
The decoded one-policy sample is $\nabla f(\vtheta,Z)$ almost surely.
By assumption, its expectation is $\nabla F(\vtheta)$, and its
mean-square error is at most $\sigma^2$. Hence it satisfies
Assumption~\ref{asm:bv-oracle}. The coupled samples are unbiased by the
same assumption, while their mean-square difference is controlled by
Proposition~\ref{prop:coupled-gaussian-realization}.
\end{proof}
%%%

The source lower bound in
Proposition~\ref{prop:arjevani-specialization} already allows the
algorithm to observe $J(\vtheta)$ exactly. In our simulation, this value
appears as the payment-state reward $J(\vtheta)/\gamma$.

\subsection{Simulation of complete reward-score feedback}
\label{app:feedback-simulation}

The next lemma makes the reduction explicit at the level of the complete
reward-score record, including the information contained separately in
the rewards and policy scores.

\begin{lemma}[Reward-score simulation]
\label{lem:reward-score-simulation}
Let $J(\vtheta)=\mathbb E_Z[f(\vtheta,Z)]$ and
$g(\vtheta,Z)=\nabla f(\vtheta,Z)$.
Given a source response $(J(\vtheta),g(\vtheta,Z))$, draw
$\xi\sim\mathcal N(0,1)$ independently and return
\[
\mathcal Y(\vtheta;Z,\xi)
=
\left(
    (\xi,\xi g(\vtheta,Z)),
    (J(\vtheta)/\gamma,0),
    (0,0),(0,0),\ldots
\right).
\]
Given the shared-seed responses at $\vtheta$ and $\vtheta'$, use the
same $\xi$ and return
\begin{align*}
\mathcal Y_1
=
\left(
    (\xi,\xi g(\vtheta,Z)),
    (J(\vtheta)/\gamma,0),
    (0,0),(0,0),\ldots
\right),
\\
\mathcal Y_2
=
\left(
    (\xi,\xi g(\vtheta',Z)),
    (J(\vtheta')/\gamma,0),
    (0,0),(0,0),\ldots
\right).
\end{align*}
These are exact simulations of the complete reward-score records in
Propositions~\ref{prop:gaussian-realization}
and~\ref{prop:coupled-gaussian-realization}.
\end{lemma}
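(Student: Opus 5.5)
The statement to prove is Lemma~\ref{lem:reward-score-simulation}. The plan is to compare joint laws directly: it suffices to show that the returned record has the same joint distribution, as a random sequence, as the record produced by the black-box experiment of Proposition~\ref{prop:gaussian-realization}. In the coupled case, the pair of returned records should have the same joint law as the pair produced by Proposition~\ref{prop:coupled-gaussian-realization}. I would instantiate the Gaussian construction with $F=J$ and with the same seed distribution $\mathsf P$ and sample objective $f$ used by the source instance. This is legitimate because $J(\vtheta)=\mathbb E_Z[f(\vtheta,Z)]$ and $\mathbb E_Z[\nabla f]=\nabla J$ hold, by the source properties in Proposition~\ref{prop:arjevani-specialization}.

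\textbf{Step 1 (one-policy case).} In the MDP of Proposition~\ref{prop:gaussian-realization}, the record is a deterministic measurable function $\Phi_{\vtheta}(Z,\xi)=\bigl((\xi,\xi\nabla f(\vtheta,Z)),(F(\vtheta)/\gamma,0),(0,0),\ldots\bigr)$. Here $Z\sim\mathsf P$ and $\xi=A_0-f(\vtheta,Z)\sim\mathcal N(0,1)$ are independent, since the Gaussian noise of the action is drawn independently of the initial state. The reward-score record from the proof of that proposition gives exactly this form: the score at the active state is $\xi\nabla f(\vtheta,Z)$, and the later scores vanish because the dummy policies do not depend on $\vtheta$. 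The simulator computes $\Phi_{\vtheta}(Z,\xi)$ from $Z\sim\mathsf P$ (the source seed) and an independently drawn $\xi\sim\mathcal N(0,1)$. It does so using only the source outputs $J(\vtheta)$ and $g(\vtheta,Z)=\nabla f(\vtheta,Z)$, since $\xi$ is the simulator's own randomness. Equality of the input laws together with identical functions then gives equality of the record laws.

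\textbf{Step 2 (coupled case).} Proposition~\ref{prop:coupled-gaussian-realization} defines the coupled pair as $(\Phi_{\vtheta}(Z,\xi),\Phi_{\vtheta'}(Z,\xi))$ with one shared $(Z,\xi)$. The shared-seed source oracle with $K=2$ returns $g(\vtheta,Z)$ and $g(\vtheta',Z)$ for one common $Z$, and the simulator reuses a single $\xi$. The simulated pair is therefore the same function of a pair $(Z,\xi)$ with the same joint law, which gives exact equality in joint distribution.

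\textbf{Step 3 (adaptivity and independence across calls).} Independence across calls follows because fresh source seeds and fresh $\xi$'s are used in each round. Consequently, for any adaptive algorithm, an induction on rounds shows that the whole interaction transcript has the same law under the simulation as under the black-box model.

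I expect no step to be a real obstacle. The only point needing care is justifying that $\xi$ is independent of $Z$ and is standard normal under the hidden policy. It is also important that the simulator needs no access to $Z$ itself, only to $g(\vtheta,Z)$, so the computation is measurable with respect to the source response plus auxiliary randomness.
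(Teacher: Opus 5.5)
Your proposal is correct and follows essentially the same route as the paper: writing $A_0=f(\vtheta,Z)+\xi$, reading off $R_1=\xi$, $S_{\vtheta,0}=\xi g(\vtheta,Z)$, $R_2=J(\vtheta)/\gamma$ and the vanishing later entries, so that the simulated record is the same function of an identically distributed pair $(Z,\xi)$; the coupled case is handled, as in the paper, by sharing both $Z$ and $\xi$. The paper additionally records the converse recovery $J(\vtheta)=\gamma R_2$ and $g(\vtheta,Z)=S_{\vtheta,0}/R_1$ almost surely, which the stated claim does not need, while your Step~3 induction over adaptive rounds is carried out by the paper in the proofs of Theorems~\ref{th:lower1} and~\ref{th:lower2} rather than in the lemma.
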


\begin{proof}
Introduce the active action $A_0=f(\vtheta,Z)+\xi$. Its score is
$\xi g(\vtheta,Z)$ and its reward is $R_1=\xi$. At the payment state, the policy score is zero and the reward is
$R_2=J(\vtheta)/\gamma$. At every subsequent step, both the reward and
the policy score are zero.

Conversely, the source response is recovered almost surely as
$J(\vtheta)=\gamma R_2$ and
$g(\vtheta,Z)=S_{\vtheta,0}/R_1$. The coupled case is identical at each
endpoint.
\end{proof}

Under the decoder specified in
Proposition~\ref{prop:gaussian-realization}, the one-policy sample associated with Lemma~\ref{lem:reward-score-simulation} is
$\widehat g(\vtheta;Z,\xi)=g(\vtheta,Z)$ almost surely.
Consequently, the instances in
Proposition~\ref{prop:arjevani-specialization} satisfy
Assumption~\ref{asm:bv-oracle} with
$\nu_0^2=\sigma_0^2$ and, in the mean-square-smooth regime,
Assumption~\ref{asm:mss-oracle} with
$L_{\mathrm{pair},0}^2=\overline L_0^2$.
These constants are independent of $\epsilon$.

\subsection{Proof of Theorem~\ref{th:lower1}}

\badlower*

\begin{proof}
Set $c_{\mathrm{BV}}\coloneqq\overline c_{\mathrm{BV}}/2$ and decrease
$\epsilon_0$, if necessary, so that
$1\leq c_{\mathrm{BV}}\epsilon^{-4}$ for every
$\epsilon\in(0,\epsilon_0)$. Fix $\epsilon\in(0,\epsilon_0)$ and denote
by $\mathsf P_{\epsilon}^{\mathrm{BV}}$ the distribution over
bounded-variance stochastic-optimization instances in the first part of
Proposition~\ref{prop:arjevani-specialization}. For every instance in
its support, apply Proposition~\ref{prop:gaussian-realization} with the
corresponding sample function $f$. Let
$\mathsf Q_{\epsilon}^{\mathrm{BV}}$ be the resulting pushforward
distribution over black-box policy-optimization instances.

The smoothness and initial-gap claims hold for every instance in the
support by Proposition~\ref{prop:arjevani-specialization}. The
bounded-variance oracle claim follows from
Corollary~\ref{cor:gaussian-oracle-parameters}, with
$\nu_0^2=\sigma_0^2$.

Suppose, toward a contradiction, that a randomized adaptive algorithm
$\mathfrak A$ operating under Definition~\ref{def:one-policy-feedback}
uses fewer than $c_{\mathrm{BV}}\epsilon^{-4}$ calls and satisfies
\[
\mathbb E_{\mathcal I\sim\mathsf Q_{\epsilon}^{\mathrm{BV}}}
\left[
    \left\|
        \nabla J_{\mathcal I}(\vtheta_{\mathrm{OUT}})
    \right\|
\right]
\leq\epsilon.
\]

Construct a stochastic-optimization algorithm $\mathfrak B$ as follows. Whenever
$\mathfrak A$ queries $\vtheta_t$, algorithm $\mathfrak B$ queries the source oracle at the same parameter and obtains
$(J(\vtheta_t),g(\vtheta_t,Z_t))$. It independently draws $\xi_t\sim\mathcal N(0,1)$, and returns to $\mathfrak A$ the record
\[
\left(
    (\xi_t,\xi_t g(\vtheta_t,Z_t)),
    (J(\vtheta_t)/\gamma,0),
    (0,0),(0,0),\ldots
\right).
\]
By Lemma~\ref{lem:reward-score-simulation}, the conditional distribution of every response is exactly the one prescribed by Definition~\ref{def:one-policy-feedback}. Hence the complete interaction history and the output of $\mathfrak A$ have the correct joint law.

If necessary, $\mathfrak B$ makes one final query at
$\vtheta_{\mathrm{OUT}}$ so that the output is one of its queried
points, as required by the oracle protocol of
\citet{arjevani2023lower}. Therefore, when the source instance is drawn
from $\mathsf P_{\epsilon}^{\mathrm{BV}}$, an algorithm using $T$
black-box calls induces a stochastic-optimization algorithm using at
most $T+1$ oracle rounds and having exactly the same expected gradient
norm.

Since $T<c_{\mathrm{BV}}\epsilon^{-4}$ and
$1\leq c_{\mathrm{BV}}\epsilon^{-4}$,
\[
T+1
<
2c_{\mathrm{BV}}\epsilon^{-4}
=
\overline c_{\mathrm{BV}}\epsilon^{-4}.
\]
This contradicts the first distributional lower bound in
Proposition~\ref{prop:arjevani-specialization}.
\end{proof}

\subsection{Proof of Theorem~\ref{th:lower2}}

\goodlower*

\begin{proof}
Set $c_{\mathrm{MSS}}\coloneqq\overline c_{\mathrm{MSS}}/2$ and decrease
$\epsilon_0$, if necessary, so that
$1\leq c_{\mathrm{MSS}}\epsilon^{-3}$ for every
$\epsilon\in(0,\epsilon_0)$. Fix $\epsilon\in(0,\epsilon_0)$ and denote
by $\mathsf P_{\epsilon}^{\mathrm{MSS}}$ the distribution over
mean-square-smooth stochastic-optimization instances in the second part
of Proposition~\ref{prop:arjevani-specialization}. For every instance
in its support, apply
Propositions~\ref{prop:gaussian-realization}
and~\ref{prop:coupled-gaussian-realization} with the corresponding
sample function $f$, and the resulting discounted performance function is $J$ by
Proposition~\ref{prop:gaussian-realization}. Let
$\mathsf Q_{\epsilon}^{\mathrm{MSS}}$ be the resulting pushforward
distribution over black-box policy-optimization instances.

Proposition~\ref{prop:arjevani-specialization} and
Corollary~\ref{cor:gaussian-oracle-parameters} verify
Assumptions~\ref{asm:smooth-performance} and~\ref{asm:bv-oracle} for
every instance in the support.
Proposition~\ref{prop:coupled-gaussian-realization} verifies
Assumption~\ref{asm:mss-oracle}, with
$L_{\mathrm{pair},0}^2=\overline L_0^2$.

Suppose, toward a contradiction, that a randomized adaptive algorithm
$\mathfrak A$ operating under Definition~\ref{def:coupled-feedback}
uses fewer than $c_{\mathrm{MSS}}\epsilon^{-3}$ black-box calls and satisfies
\[
\mathbb E_{\mathcal I\sim\mathsf Q_{\epsilon}^{\mathrm{MSS}}}
\left[
    \left\|
        \nabla J_{\mathcal I}(\vtheta_{\mathrm{OUT}})
    \right\|
\right]
\leq\epsilon.
\]

Construct an algorithm $\mathfrak B$ for the $K=2$ shared-seed oracle
of~\citet{arjevani2023lower}. If $\mathfrak A$ makes a coupled query
$(\vtheta_t,\vtheta_t')$, algorithm $\mathfrak B$ queries the batch
$(\vtheta_t,\vtheta_t')$. It receives
\[
\left(
    J(\vtheta_t),g(\vtheta_t,Z_t)
\right),
\qquad
\left(
    J(\vtheta_t'),g(\vtheta_t',Z_t)
\right),
\]
draws one independent $\xi_t\sim\mathcal N(0,1)$, and returns
\begin{align*}
\left(
    (\xi_t,\xi_t g(\vtheta_t,Z_t)),
    (J(\vtheta_t)/\gamma,0),
    (0,0),(0,0),\ldots
\right),
\\
\left(
    (\xi_t,\xi_t g(\vtheta_t',Z_t)),
    (J(\vtheta_t')/\gamma,0),
    (0,0),(0,0),\ldots
\right).
\end{align*}
If $\mathfrak A$ makes a one-policy query at $\vtheta_t$, algorithm
$\mathfrak B$ queries the repeated batch
$(\vtheta_t,\vtheta_t)$ and returns the first simulated record.

By Lemma~\ref{lem:reward-score-simulation}, this procedure exactly simulates every complete reward-score record observed by $\mathfrak A$. Moreover, one black-box call is simulated by one $K=2$ oracle round. As in the proof of Theorem~\ref{th:lower1}, at most one additional round is needed to query the final output.

Since $T<c_{\mathrm{MSS}}\epsilon^{-3}$ and
$1\leq c_{\mathrm{MSS}}\epsilon^{-3}$,
\[
T+1
<
2c_{\mathrm{MSS}}\epsilon^{-3}
=
\overline c_{\mathrm{MSS}}\epsilon^{-3}.
\]
This contradicts the second distributional lower bound in
Proposition~\ref{prop:arjevani-specialization}.
\end{proof}

\FloatBarrier

% \section{Experimental details}\label{app:exp}
% In this section, we provide pseudocode for the variants of \algo
% based on other variance-reduced policy-gradient algorithms. The
% hyperparameters used in the CartPole experiments are reported in
% Table~\ref{tab:cart}.

% \begin{table}[t]
% \caption{Hyperparameters for CartPole. We denote by $p$ the refresh
% probability of PAGE-PG and DEF-PG, and by $\beta$ the momentum
% coefficient of STORM-PG.}\label{tab:cart}
% \begin{tabular}{@{}lcr@{}}
% \toprule
% Algorithm      & Step Size & Other Parameters \\ \midrule
% DEF-PG         & $5\cdot10^{-4}$          &   $p=0.8$              \\
% PAGE-PG         & $5\cdot10^{-5}$          &    $p=0.8$             \\
% STORM-PG        & $4\cdot10^{-5}$          &   $\beta=0.99$              \\
% DEF-PG(STORM)  & $4\cdot10^{-4}$          &   $\beta=0.99$              \\
% SRVR-PG         & $5\cdot10^{-7}$          &                 \\
% DEF-PG(SRVR-PG) & $10^{-6}$          &                 \\
% SVRPG          & $10^{-4}$          &                 \\
% DEF-PG(SVRPG)  & $1.5\cdot10^{-4}$          &                 \\ \bottomrule
% \end{tabular}

% \end{table}

\section{Additional Defensive Policy-Gradient Variants}
\label{app:variants}
In this section, we provide pseudocode for the variants of \algo
based on SVRPG, SRVR-PG, and STORM-PG.

\begin{algorithm}[h]
\caption{DEF-PG (SVRPG variant)}
\label{algo:def-svrpg}
\begin{algorithmic}[1]
\State \textbf{Input:} initial parameter $\vtheta_0$, number of epochs
$S$, epoch length $m$, large batch size $N$, small batch size $B$,
step size $\eta>0$, defensive parameter $\alpha\in(0,1)$
\State $\widetilde{\vtheta}^{\,0}\gets\vtheta_0$
\For{$s=0,\ldots,S-1$}
    \State $\vtheta_0^s\gets\widetilde{\vtheta}^{\,s}$
    \For{$i=1,\ldots,N$}
        \State Sample
        $\overline{\tau}_i\sim
        p(\cdot\mid\widetilde{\vtheta}^{\,s})$
    \EndFor
    \State $\mu_s\gets
    \frac{1}{N}\sum_{i=1}^N
    g(\widetilde{\vtheta}^{\,s},\overline{\tau}_i)$
    \State $v_0^s\gets\mu_s$
    \State $\vtheta_1^s\gets\vtheta_0^s+\eta v_0^s$
    \For{$t=1,\ldots,m-1$}
        \For{$i=1,\ldots,B$}
            \State Sample
            $\tau_i\sim
            q_{\alpha,\vtheta_t^s,\widetilde{\vtheta}^{\,s}}$
            as in Algorithm~\ref{algo_ppgdef}
            \State $\omega_{t,i}^{s,+}\gets
            \omega_\alpha
            (\vtheta_t^s,\widetilde{\vtheta}^{\,s};\tau_i)$
            \State $\omega_{t,i}^{s,-}\gets
            \omega_\alpha'
            (\vtheta_t^s,\widetilde{\vtheta}^{\,s};\tau_i)$
        \EndFor
        \State $v_t^s\gets
        \mu_s+
        \frac{1}{B}\sum_{i=1}^B
        \left[
            \omega_{t,i}^{s,+}g(\vtheta_t^s,\tau_i)
            -
            \omega_{t,i}^{s,-}
            g(\widetilde{\vtheta}^{\,s},\tau_i)
        \right]$
        \State $\vtheta_{t+1}^s\gets\vtheta_t^s+\eta v_t^s$
    \EndFor
    \State $\widetilde{\vtheta}^{\,s+1}\gets\vtheta_m^s$
\EndFor
\State \textbf{Output:} $\vtheta_{\mathrm{OUT}}$ chosen uniformly from
$\{\vtheta_t^s:
s=0,\ldots,S-1,\ t=0,\ldots,m-1\}$
\end{algorithmic}
\end{algorithm}

\begin{algorithm}[h]
\caption{DEF-PG (SRVR-PG variant)}
\label{algo:def-SRVR-PG}
\begin{algorithmic}[1]
\State \textbf{Input:} initial parameter $\vtheta_0$, number of epochs
$S$, epoch length $m$, large batch size $N$, small batch size $B$,
step size $\eta>0$, defensive parameter $\alpha\in(0,1)$
\State $\widetilde{\vtheta}^{\,0}\gets\vtheta_0$
\For{$s=0,\ldots,S-1$}
    \State $\vtheta_0^s\gets\widetilde{\vtheta}^{\,s}$
    \For{$i=1,\ldots,N$}
        \State Sample
        $\overline{\tau}_i\sim
        p(\cdot\mid\widetilde{\vtheta}^{\,s})$
    \EndFor
    \State $v_0^s\gets
    \frac{1}{N}\sum_{i=1}^N
    g(\widetilde{\vtheta}^{\,s},\overline{\tau}_i)$
    \State $\vtheta_1^s\gets\vtheta_0^s+\eta v_0^s$
    \For{$t=1,\ldots,m-1$}
        \For{$i=1,\ldots,B$}
            \State Sample
            $\tau_i\sim
            q_{\alpha,\vtheta_t^s,\vtheta_{t-1}^s}$
            as in Algorithm~\ref{algo_ppgdef}
            \State $\omega_{t,i}^{s,+}\gets
            \omega_\alpha
            (\vtheta_t^s,\vtheta_{t-1}^s;\tau_i)$
            \State $\omega_{t,i}^{s,-}\gets
            \omega_\alpha'
            (\vtheta_t^s,\vtheta_{t-1}^s;\tau_i)$
        \EndFor
        \State $v_t^s\gets
        v_{t-1}^s+
        \frac{1}{B}\sum_{i=1}^B
        \left[
            \omega_{t,i}^{s,+}g(\vtheta_t^s,\tau_i)
            -
            \omega_{t,i}^{s,-}g(\vtheta_{t-1}^s,\tau_i)
        \right]$
        \State $\vtheta_{t+1}^s\gets\vtheta_t^s+\eta v_t^s$
    \EndFor
    \State $\widetilde{\vtheta}^{\,s+1}\gets\vtheta_m^s$
\EndFor
\State \textbf{Output:} $\vtheta_{\mathrm{OUT}}$ chosen uniformly from
$\{\vtheta_t^s:
s=0,\ldots,S-1,\ t=0,\ldots,m-1\}$
\end{algorithmic}
\end{algorithm}

\begin{algorithm}[h]
\caption{DEF-PG (STORM-PG variant)}
\label{algo:def-STORM-PG}
\begin{algorithmic}[1]
\State \textbf{Input:} initial parameter $\vtheta_0$, initial batch
size $N_0$, mini-batch size $B$, step size $\eta>0$, defensive
parameter $\alpha\in(0,1)$, momentum parameter $\beta\in(0,1)$,
number of iterations $T$
\For{$i=1,\ldots,N_0$}
    \State Sample
    $\overline{\tau}_i\sim p(\cdot\mid\vtheta_0)$
\EndFor
\State $v_0\gets
\frac{1}{N_0}\sum_{i=1}^{N_0}
g(\vtheta_0,\overline{\tau}_i)$
\For{$t=0,\ldots,T-1$}
    \State $\vtheta_{t+1}\gets\vtheta_t+\eta v_t$
    \For{$i=1,\ldots,B$}
        \State Sample
        $\tau_i\sim q_{\alpha,\vtheta_{t+1},\vtheta_t}$
        as in Algorithm~\ref{algo_ppgdef}
        \State $\omega_{t+1,i}^{+}\gets
        \omega_\alpha(\vtheta_{t+1},\vtheta_t;\tau_i)$
        \State $\omega_{t,i}^{-}\gets
        \omega_\alpha'(\vtheta_{t+1},\vtheta_t;\tau_i)$
    \EndFor
    \State $v_{t+1}\gets
    (1-\beta)v_t+
    \frac{1}{B}\sum_{i=1}^B
    \left[
        \omega_{t+1,i}^{+}g(\vtheta_{t+1},\tau_i)
        -
        (1-\beta)\omega_{t,i}^{-}g(\vtheta_t,\tau_i)
    \right]$
\EndFor
\State \textbf{Output:} $\vtheta_{\mathrm{OUT}}$ chosen uniformly from
$\{\vtheta_t\}_{t=0}^{T-1}$
\end{algorithmic}
\end{algorithm}